\newcommand{\loss}{\mathcal{L}}
\newcommand{\epsc}{\varepsilon_c}
\newcommand{\dist}{d}
\newtheorem{definition}{Definition}
\newtheorem{theorem}{Theorem}
\newtheorem{proposition}{Proposition}
\newtheorem{remark}{Remark} 
\newtheorem{assumption}{Assumption}
\title{\textbf{ Manifold Percolation: from generative model to Reinforce learning}\\Metrics, Dynamics, and Topological Supervision}
\author{Rui Tong \\ \textit{Department of Statistics}\\ \textit{University of Warwick}\\ \textit{Rui.Tong@Warwick.ac.uk}}
\begin{document}
\maketitle

\begin{abstract}
Generative modeling is typically framed as learning mapping rules, but from an observer's perspective without access to these rules, the task manifests as disentangling the geometric \textbf{support} from the probability \textbf{distribution}.
We propose that \textbf{Continuum Percolation} is uniquely suited for this support analysis, as the sampling process effectively projects high-dimensional density estimation onto a geometric counting problem on the support.
In this work, we establish a rigorous isomorphism between the topological phase transitions of Random Geometric Graphs and the underlying data manifold in high-dimensional space.
By analyzing the relationship between our proposed Percolation Shift metric and FID, we demonstrate that our metric captures structural pathologies (such as implicit mode collapse) where statistical metrics fail.
Finally, we translate this topological phenomenon into a differentiable loss function to guide training.
Experimental results confirm that this approach not only prevents manifold shrinkage but fosters a \textbf{"Synergistic Improvement,"} where topological stability acts as a prerequisite for sustained high fidelity in both static generation and sequential decision-making.
\end{abstract}

\section{Introduction}

\subsection{Motivation: The "Phantom Manifold" Thought Experiment}
\label{sec:thought_experiment}

Why do we need a topological "Observer's View" for generative modeling? Consider the following thought experiment.

Let $\mathcal{D}_{\text{noise}} = \{ \bm{z}^{(k)} \}_{k=1}^N$ be a dataset of $N$ samples drawn from pure white noise, $\bm{z} \sim \mathcal{N}(0, \mathbf{I}_D)$. By definition, the pixels are independent: the ground-truth Jacobian is strictly diagonal, $J_{ij} = \frac{\partial z_i}{\partial z_j} = 0$ for $i \neq j$. The true data manifold is the entire ambient space $\mathbb{R}^D$ (or a high-dimensional ball), with no low-dimensional structure.

Now, suppose we train a powerful neural network (e.g., a diffusion model\cite{sohl2015thermo, ho2020ddpm}) to model this dataset. Due to the finite $N$ and the over-parameterized nature of deep networks, the model will inevitably \textit{overfit}. It will memorize the specific noise patterns in $\mathcal{D}_{\text{noise}}$.
Mathematically, the learned function $f_\theta$ will encode spurious correlations, resulting in a non-zero sensitivity map:
\begin{equation}
    \left| \frac{\partial \hat{x}_i}{\partial \hat{x}_j} \right| > 0 \quad \text{for } i \neq j.
\end{equation}
To the model, the data appears to lie on a complex, low-dimensional \textbf{"Phantom Manifold"} defined by these spurious dependencies.

Here lies the paradox:
\begin{itemize}
    \item \textbf{From the Model's Internal View (Loss):} The training loss approaches zero. The model believes it has perfectly learned the "structure" of the data.
    \item \textbf{From the Physical View (Independence):} The model is hallucinating structure where there is only entropy.
\end{itemize}

This paradox reveals that we cannot trust the model's internal proxies (loss, likelihood) to judge geometric validity. We require an external, observer-centric metric capable of distinguishing between a \textit{Phantom Manifold} (memorized noise clumps) and a \textit{True Manifold} (semantic structure). This motivates our use of \textbf{Continuum Percolation}: a topological tool specifically designed to analyze the connectivity and volume of point clouds without accessing the generating rules.

\section{Theoretical Framework: Manifold Percolation}
\label{sec: theory}
In this section, we establish the mathematical foundation of our framework. We bridge the gap between generative modeling and statistical physics by formalizing the "Observer's View" through the lens of Continuum Percolation Theory.

\subsection{Preliminaries: Random Geometric Graphs on Manifolds}
Let the high-dimensional data space be $\mathbb{R}^D$. We assume the
\textit{Manifold Hypothesis}, positing that the real data distribution
$p_{\text{data}}$ is supported on a lower-dimensional compact Riemannian
manifold $\mathcal{M} \subset \mathbb{R}^D$ with intrinsic dimension $d \ll D$
and total volume $\mathrm{Vol}(\mathcal{M})$.
Consider a generative model $p_\theta$ attempting to approximate $p_{\text{data}}$.
From an observer’s perspective, we access the manifold solely through finite
sampling. Let
\[
S_N = \{ x^{(i)} \}_{i=1}^N \overset{\text{i.i.d.}}{\sim} p_\theta
\]
be a set of $N$ generated samples.

\begin{definition}[Random Geometric Graph]
Following the classical \textit{Continuum Percolation} framework introduced by
Gilbert~\cite{gilbert1961} and formalized by Meester \& Roy
\cite{meester1996}, a Random Geometric Graph (RGG) $\mathcal{G}(S_N,
\varepsilon)$ is defined by a vertex set $V = S_N$ and an edge set
\begin{equation}
E_\varepsilon
= \{ (x^{(i)}, x^{(j)}) \mid \mathrm{dist}(x^{(i)}, x^{(j)}) \le \varepsilon,\ i \neq j \}.
\end{equation}
\end{definition}

Penrose~\cite{penrose2003} proved that as $N \to \infty$, the
connectivity and topological properties of $\mathcal{G}(S_N, \varepsilon)$
constructed on a Riemannian manifold converge (in Hausdorff distance and
component structure) to those of the underlying manifold $\mathcal{M}$ itself.
This provides the theoretical basis for using RGGs as manifold estimators.

\subsection{Phase Transition and the Order Parameter}
Percolation theory studies the emergence of global connectivity from local
interactions. We define the \textbf{Giant Component Ratio}
\begin{equation}
P_\infty(\varepsilon)
= \frac{|C_{\max}(\mathcal{G}_\varepsilon)|}{N},
\end{equation}
where $|C_{\max}|$ denotes the size of the largest connected component.
Since $P_\infty(\varepsilon)$ is non-decreasing, we define the finite-size
critical threshold
\begin{equation}
\varepsilon_c(N)
= \inf \{ \varepsilon \mid P_\infty(\varepsilon) \ge 0.5 \}.
\end{equation}

\paragraph{Justification for the 0.5 Threshold.}
The choice is grounded in the \textit{Uniqueness of the Infinite Cluster}
theorem by Burton \& Keane~\cite{burton1989}, which proves that in
continuum percolation, the giant component is almost surely unique; thus, two
components cannot each exceed $50\%$ of the total mass. Therefore, reaching
$P_\infty = 0.5$ necessarily marks the emergence of the unique dominant cluster.

\paragraph{Finite-Size Effects.}
As shown in classical percolation studies
(Stauffer \& Aharony~\cite{stauffer1994};
Bollobás \& Riordan~\cite{bollobas2006}),
the transition is a sharp step function as $N \to \infty$, but becomes a
sigmoid-like curve for finite systems. The point $\varepsilon_c(N)$ therefore
acts as a meaningful geometric descriptor of the connectivity structure.

\subsection{Scaling Laws and Intrinsic Dimension}
\label{sec:scaling_proof}
To justify $\varepsilon_c$ as a proxy for manifold volume, we first derive its scaling
under the uniform assumption.

\begin{proposition}[Finite-Size Scaling Law]
For $N$ samples uniformly distributed on a $d$-dimensional manifold
$\mathcal{M}$, the percolation threshold satisfies
\begin{equation}
\varepsilon_c(N) \propto N^{-1/d}.
\end{equation}
\end{proposition}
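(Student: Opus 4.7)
The plan is to reduce the problem to classical continuum percolation on $\mathbb{R}^d$ by exploiting the local Euclidean geometry of $\mathcal{M}$ at small scales and then invoking the universality of the critical intensity-volume product from the Meester--Roy / Penrose framework already cited above.

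First, I would replace the uniform $N$-sample model by a Poisson point process on $\mathcal{M}$ of intensity $\rho_N = N/\mathrm{Vol}(\mathcal{M})$ via a standard de-Poissonization coupling. The resulting fluctuations in vertex count are $O(\sqrt{N})$, which are subleading compared to the scaling we aim to establish. Next, I would use the Riemannian volume expansion for small geodesic balls: for $\varepsilon$ small compared to the injectivity radius and the curvature scale of $\mathcal{M}$, one has $\mathrm{Vol}(B_\varepsilon(x) \cap \mathcal{M}) = \omega_d \varepsilon^d (1 + O(\varepsilon^2))$, where $\omega_d$ is the volume of the unit $d$-ball. Because $\varepsilon_c(N) \to 0$ as $N \to \infty$, these curvature corrections vanish in the limit and the local environment of every point looks like flat $\mathbb{R}^d$ at the percolation scale.

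The expected number of neighbors of a typical vertex within distance $\varepsilon$ is therefore
\begin{equation}
\lambda(\varepsilon, N) \;=\; \rho_N \,\omega_d\, \varepsilon^d \,\bigl(1+o(1)\bigr).
\end{equation}
By the universality of continuum percolation (Meester--Roy) combined with Penrose's convergence theorem for RGGs on Riemannian manifolds, the emergence of a giant component of positive mass (in particular, the crossing $P_\infty = 1/2$ that defines $\varepsilon_c(N)$) occurs precisely when the dimensionless quantity $\lambda(\varepsilon, N)$ crosses a critical value $\lambda_c^\star(d)$ depending only on the intrinsic dimension. Setting $\lambda(\varepsilon_c(N), N) = \lambda_c^\star(d)$ and inverting yields
\begin{equation}
\varepsilon_c(N) \;=\; \left( \frac{\lambda_c^\star(d)\,\mathrm{Vol}(\mathcal{M})}{\omega_d\, N} \right)^{\!1/d}\bigl(1+o(1)\bigr),
\end{equation}
which is exactly the claimed $N^{-1/d}$ proportionality, with the hidden constant depending only on $d$ and $\mathrm{Vol}(\mathcal{M})$.

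The main obstacle is the transfer of the Euclidean universal threshold $\lambda_c^\star(d)$ to the manifold setting \emph{uniformly in position}. Concretely, one must rule out degenerate behavior near boundary points or high-curvature regions of $\mathcal{M}$ where the volume expansion loses uniformity, and one must argue that the finite-size window around $P_\infty = 1/2$ does not shift the extracted threshold by more than a multiplicative constant. Penrose's Hausdorff-convergence result, cited earlier, furnishes precisely the tools needed: it permits a chart-by-chart coupling of the manifold Poisson process with a flat Euclidean Poisson process through normal coordinates, with coupling errors absorbed into the $o(1)$ term above. All other steps are routine volume bookkeeping.
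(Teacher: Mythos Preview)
Your proposal is correct and follows essentially the same approach as the paper: both compute the expected degree via the local geodesic-ball volume expansion $\mathrm{Vol}(B_\varepsilon(x)\cap\mathcal{M}) \approx C_d\,\varepsilon^d$, set it equal to a dimension-dependent critical constant $\lambda_c(d)$, and invert to obtain $\varepsilon_c(N) = (\lambda_c\,\mathrm{Vol}(\mathcal{M})/C_d)^{1/d}\,N^{-1/d}$. Your version is more careful---adding the de-Poissonization coupling, tracking the $o(1)$ curvature correction, and flagging the uniformity issues near boundary and high-curvature regions---but the skeleton is identical to the paper's argument.
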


\begin{proof}
Penrose~\cite{penrose1997} established that
the connectivity threshold corresponds to the longest edge of the Minimal
Spanning Tree (MST), occurring when the expected degree reaches a critical
dimension-dependent constant $\lambda_c(d)$.
For sufficiently small $\varepsilon$, the volume of the geodesic ball satisfies
\[
\mathrm{Vol}(B_\varepsilon(x) \cap \mathcal{M})
= C_d \varepsilon^d + O(\varepsilon^{d+2}),
\]
where $C_d$ is the volume of the unit $d$-ball.
Given uniform density $\rho = N / \mathrm{Vol}(\mathcal{M})$, the expected degree is
\begin{equation}
\langle k \rangle
\approx \rho \cdot C_d \varepsilon^d
= \frac{N C_d \varepsilon^d}{\mathrm{Vol}(\mathcal{M})}.
\end{equation}
At the critical point $\varepsilon = \varepsilon_c$, we have
\begin{equation}
\lambda_c
= \frac{N C_d \varepsilon_c^d}{\mathrm{Vol}(\mathcal{M})}
\quad \Rightarrow \quad
\varepsilon_c^d
= \frac{\lambda_c \mathrm{Vol}(\mathcal{M})}{C_d}\, \frac{1}{N}.
\end{equation}
Taking the $d$-th root yields
\begin{equation}
\varepsilon_c(N)
= \left( \frac{\lambda_c \mathrm{Vol}(\mathcal{M})}{C_d} \right)^{\!1/d} N^{-1/d}.
\end{equation}
\end{proof}

\begin{figure}[H]
    \centering
    \includegraphics[width=0.8\linewidth]{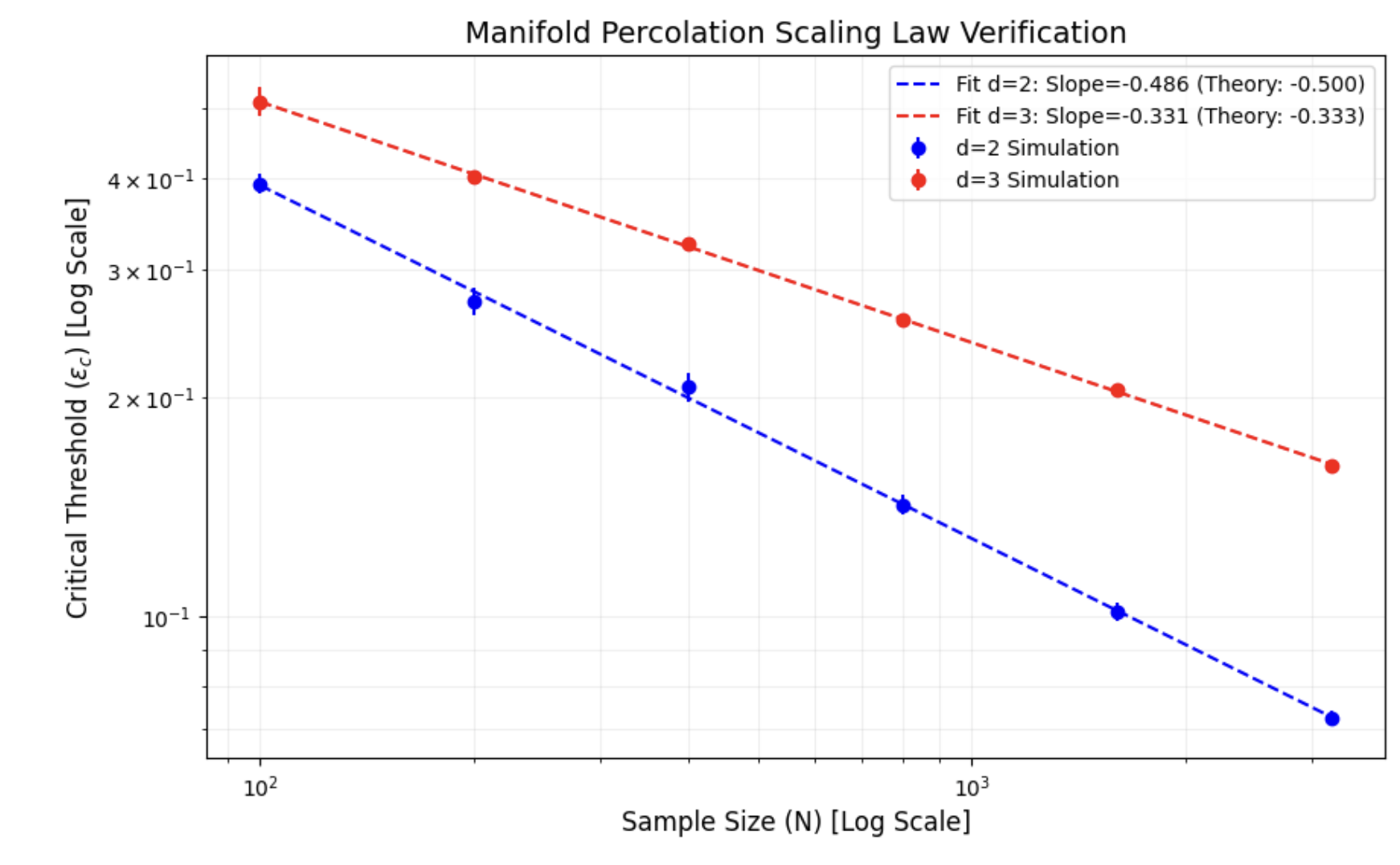}
    \caption{\textbf{Empirical Verification of Manifold Percolation Scaling.}
    We simulate Random Geometric Graphs on hyperspheres ($S^d$) with varying sample sizes $N \in [100, 3000]$.
    The log-log plot reveals a linear relationship where the slope corresponds precisely to $-1/d$, validating the theoretical scaling law $\varepsilon_c \propto N^{-1/d}$.
    Error bars denote the standard deviation over 5 independent trials, demonstrating the metric's stability.}
    \label{fig:scaling_law}
\end{figure}

We now generalize this result to realistic scenarios where the data distribution is non-uniform.

\begin{proposition}[Generalization to Non-Uniform Density]
Assume samples are drawn from a non-uniform probability density function $p(x)$ supported on $\mathcal{M}$. The critical threshold $\varepsilon_c$ scales as:
\begin{equation}
    \varepsilon_c(N) \propto \mathcal{H}_2(p)^{-1/d} \cdot N^{-1/d}
\end{equation}
where $\mathcal{H}_2(p) = \int_{\mathcal{M}} p^2(x) dx$ relates to the collision entropy. Crucially, the scaling exponent $-1/d$ remains invariant.
\end{proposition}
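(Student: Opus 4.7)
The plan is to replicate the expected-degree argument of Proposition 1 with the uniform density $\rho = N/\mathrm{Vol}(\mathcal{M})$ replaced by the appropriately averaged local density seen by a typical sampled vertex. In the inhomogeneous setting, the local sampling intensity at a point $x \in \mathcal{M}$ is $N p(x)$, so conditional on a sample being placed at $x$, its expected number of neighbors satisfies
\begin{equation}
\E[\deg \mid X = x] = N p(x) C_d \varepsilon^d + O(\varepsilon^{d+2}).
\end{equation}
Since percolation is driven by degrees seen from an actual sampled vertex, I would take the Palm expectation by averaging against the sampling density $p(x)$ itself, which yields
\begin{equation}
\langle k \rangle_{\text{typical}} = \int_{\mathcal{M}} p(x) \cdot N p(x) C_d \varepsilon^d \, dx = N C_d \varepsilon^d \mathcal{H}_2(p) + O(\varepsilon^{d+2}).
\end{equation}
The collision entropy $\mathcal{H}_2(p)$ thus emerges naturally as the effective density felt by a representative point, encoding the probability that two independent draws from $p$ land within the same infinitesimal neighborhood.

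Next I would reapply the Penrose criterion invoked in Proposition 1: emergence of the giant component corresponds to the typical expected degree crossing a dimension-only constant $\lambda_c(d)$. Setting $\langle k \rangle_{\text{typical}} = \lambda_c$ at $\varepsilon = \varepsilon_c$ and solving for $\varepsilon_c$ gives
\begin{equation}
\varepsilon_c(N) = \left(\frac{\lambda_c(d)}{C_d}\right)^{\!1/d} \mathcal{H}_2(p)^{-1/d} N^{-1/d},
\end{equation}
from which both the claimed proportionality and the invariance of the $-1/d$ exponent follow immediately. Crucially, the density enters only through a multiplicative prefactor, leaving the intrinsic-dimension scaling untouched.

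The hard part will be justifying that the Palm-averaged expected degree is genuinely the right order parameter for non-uniform percolation. Unlike the uniform case, denser regions cross their local percolation threshold first, so a priori a giant cluster could form inside a single high-density island while the low-density bulk remains subcritical, which would pin the global $\varepsilon_c$ to $(\inf_x p(x))^{-1/d}$ rather than to $\mathcal{H}_2(p)^{-1/d}$. I would bar this pathology by assuming a mild regularity condition that $p$ is bounded away from zero on a connected subset of $\mathcal{M}$ of positive measure, and then apply a stochastic-domination coupling sandwiching the non-uniform RGG between uniform RGGs built with densities $\inf p$ and $\sup p$. The Burton--Keane uniqueness theorem already invoked in Section 2.2 then guarantees that at the $P_\infty = 0.5$ level the dominant cluster must absorb mass from all sufficiently dense regions, aligning its onset with the typical-degree condition. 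A secondary technicality is controlling the $O(\varepsilon^{d+2})$ curvature correction uniformly in $x$, which follows from compactness of $\mathcal{M}$ and standard Riemannian volume expansions.
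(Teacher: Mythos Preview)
Your argument is essentially identical to the paper's: compute the local expected degree $N p(x) C_d \varepsilon^d$, average it against the sampling density $p(x)$ to obtain $\langle k \rangle \approx N C_d \varepsilon^d \int p^2$, set this equal to $\lambda_c$, and solve. The paper's proof stops there and does not address the high-density-island pathology, the stochastic-domination sandwich, or the uniform curvature control you raise in your final paragraph, so on those points you are already more careful than the original.
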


\begin{proof}
For a non-uniform density $p(x)$, the local expected degree at $x$ is $\mathbb{E}[k(x)] \approx N p(x) C_d \varepsilon^d$. The global connectivity transition is governed by the average degree of the system reaching a critical value $\lambda_c$:
\begin{equation}
    \langle k \rangle = \int_{\mathcal{M}} \mathbb{E}[k(x)] p(x) dx \approx N C_d \varepsilon^d \int_{\mathcal{M}} p^2(x) dx.
\end{equation}
Solving for $\varepsilon_c$ at $\langle k \rangle = \lambda_c$ yields $\varepsilon_c \propto (\int p^2(x) dx)^{-1/d} N^{-1/d}$. This confirms that the scaling law is robust to density variations, with the "effective volume" determined by the Rényi entropy.
\end{proof}

\subsection{Detecting Mode Collapse via Manifold Shrinkage}
Using the scaling relationship above, we derive the main theorem supporting our
\textbf{Percolation Shift} metric.

\begin{theorem}[Manifold Shrinkage Theorem]
\label{thm:shrinkage}
Let $\varepsilon_c^{\text{real}}$ and $\varepsilon_c^{\text{model}}$ denote the
percolation thresholds of real data $p_{\text{data}}$ and model distribution
$p_\theta$, respectively, computed with the same sample size $N$. If the model
exhibits \textbf{manifold shrinkage},
\[
\mathrm{Vol}(\mathcal{M}_{\text{model}})
<
\mathrm{Vol}(\mathcal{M}_{\text{real}}),
\]
and both share intrinsic dimension $d$, then
\begin{equation}
\varepsilon_c^{\text{model}} < \varepsilon_c^{\text{real}}.
\end{equation}
Thus $\Delta\varepsilon_c = \varepsilon_c^{\text{model}} -
\varepsilon_c^{\text{real}} < 0$ is a quantitative signature of mode collapse.
\end{theorem}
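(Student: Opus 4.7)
The plan is to invoke the Finite-Size Scaling Law (Proposition 1) as the essential input and reduce the theorem to a simple monotonicity argument. The key observation is that, under the hypotheses, the only quantity distinguishing $\varepsilon_c^{\text{real}}$ from $\varepsilon_c^{\text{model}}$ in the closed-form expression is the manifold volume: the sample size $N$, the intrinsic dimension $d$, the unit-ball constant $C_d$, and the Penrose critical constant $\lambda_c(d)$ all coincide between the two settings. Therefore the inequality on thresholds should follow directly from the inequality on volumes, modulated only by the strictly increasing map $x \mapsto x^{1/d}$.

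Concretely, I would first write
\begin{equation}
\varepsilon_c^{\text{real}} = \left( \frac{\lambda_c \mathrm{Vol}(\mathcal{M}_{\text{real}})}{C_d} \right)^{1/d} N^{-1/d}, \qquad
\varepsilon_c^{\text{model}} = \left( \frac{\lambda_c \mathrm{Vol}(\mathcal{M}_{\text{model}})}{C_d} \right)^{1/d} N^{-1/d},
\end{equation}
by applying Proposition 1 to each distribution separately. Taking the ratio cancels all common factors, yielding
\begin{equation}
\frac{\varepsilon_c^{\text{model}}}{\varepsilon_c^{\text{real}}} = \left( \frac{\mathrm{Vol}(\mathcal{M}_{\text{model}})}{\mathrm{Vol}(\mathcal{M}_{\text{real}})} \right)^{1/d}.
\end{equation}
Since volumes are positive and $x \mapsto x^{1/d}$ is strictly increasing on $(0,\infty)$, the shrinkage hypothesis $\mathrm{Vol}(\mathcal{M}_{\text{model}}) < \mathrm{Vol}(\mathcal{M}_{\text{real}})$ forces the ratio to be strictly less than one, hence $\varepsilon_c^{\text{model}} < \varepsilon_c^{\text{real}}$ and $\Delta\varepsilon_c < 0$.

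The main obstacle, and the reason this deserves care rather than a one-line invocation, is the uniformity assumption built into Proposition 1. In practice, mode collapse produces highly non-uniform $p_\theta$, so a direct appeal to Proposition 1 applied to $p_\theta$ is not legitimate. To close this gap I would reinterpret ``volume'' through Proposition 2: the operative quantity is the effective support size $\mathcal{H}_2(p)^{-1}$, which equals $\mathrm{Vol}(\mathcal{M})$ only in the uniform case. I would then state the shrinkage hypothesis in its effective form, $\mathcal{H}_2(p_\theta)^{-1} < \mathcal{H}_2(p_{\text{data}})^{-1}$, which is the correct geometric translation of mode collapse (the model's mass concentrates on a smaller effective region). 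With this substitution the ratio argument above goes through verbatim, and the invariance of the exponent $-1/d$ established in Proposition 2 guarantees that no additional dimensional corrections appear. A secondary, easily dispatched subtlety is the $O(\varepsilon^{d+2})$ correction to the geodesic ball volume; since we compare two thresholds at a common $N$ in the leading asymptotic regime, this higher-order term is subdominant and does not affect the sign of $\Delta\varepsilon_c$.
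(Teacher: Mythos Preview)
Your proposal is correct and follows essentially the same approach as the paper: write $\varepsilon_c = \alpha\,\mathrm{Vol}(\mathcal{M})^{1/d}$ from the scaling law of Proposition~1 with a common constant $\alpha$, then apply the strict monotonicity of $x \mapsto x^{1/d}$ to transfer the volume inequality to the thresholds. Your additional remarks on handling non-uniform densities via Proposition~2 and on the subleading $O(\varepsilon^{d+2})$ correction go beyond the paper's own two-line proof, which simply takes the proportionality at face value without addressing these subtleties.
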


\begin{proof}
From Section~\ref{sec:scaling_proof},
\[
\varepsilon_c \propto (\mathrm{Vol}(\mathcal{M}))^{1/d}.
\]
Let $\alpha$ be the constant factor (which depends on $d$ and $\lambda_c$). Then
\[
\varepsilon_c^{\text{model}}
= \alpha (\mathrm{Vol}(\mathcal{M}_{\text{model}}))^{1/d},
\qquad
\varepsilon_c^{\text{real}}
= \alpha (\mathrm{Vol}(\mathcal{M}_{\text{real}}))^{1/d}.
\]
Since $f(x)=x^{1/d}$ is strictly increasing for $x>0$, the volume inequality
immediately yields the result.
\end{proof}

\subsection{Metric Spaces: From Pixel to Semantic Percolation}
\label{sec:metric_spaces}
While the standard RGG is constructed in the raw pixel space $\mathcal{X} \subset \mathbb{R}^{D}$, relying solely on Euclidean distance ($d_{\text{pix}}(x,y) = \|x-y\|_2$) can be misleading. Pixel-wise distance is known to be sensitive to translation and noise, often failing to capture high-level structural differences.

To rigorously evaluate the \textit{semantic} topology, we extend the percolation framework to the deep feature space defined by a pre-trained encoder $\phi: \mathcal{X} \to \mathbb{R}^{d'}$. We employ a VGG-16 \cite{simonyan2015vgg}network pre-trained on ImageNet as the feature extractor $\phi$. The \textbf{Semantic Percolation Process} is defined by the edge set:
\begin{equation}
E_\varepsilon^{\text{feat}} = \{ (x^{(i)}, x^{(j)}) \mid \|\phi(x^{(i)}) - \phi(x^{(j)})\|_2 \le \varepsilon \}.
\end{equation}
This projection allows us to detect "Semantic Collapse," where a model might maintain pixel variance (noise) but fails to cover the semantic diversity of the true manifold.

\section{Manifold Dynamics and Topological Diagnosis}
\label{sec:diagnosis}

We propose the \textbf{Percolation Shift} as a diagnostic metric to quantify the topological alignment between the generated distribution and the real data manifold:
\begin{equation}
    \Delta \varepsilon_c = \varepsilon_c(\text{Model}) - \varepsilon_c(\text{Real}).
\end{equation}
According to the Manifold Shrinkage Theorem (Theorem~\ref{thm:shrinkage}), a negative shift ($\Delta \varepsilon_c < 0$) serves as a \emph{monotone proxy} for manifold shrinkage, i.e., it provides strong evidence that
\[
    \mathrm{Vol}(\mathcal{M}_{\text{model}}) < \mathrm{Vol}(\mathcal{M}_{\text{real}}).
\]

\subsection{The Geometric Mechanism of Shrinkage}

To validate our theoretical framework, we first conduct a controlled experiment on a synthetic 2D manifold. We simulate \emph{Implicit Mode Collapse} by reducing the intra-mode variance of a Gaussian mixture while preserving the mode centers.
As predicted, the percolation phase transition curve shifts to the left, resulting in a negative $\Delta \varepsilon_c$. This confirms that our metric is sensitive to the effective volume of the support, effectively distinguishing between a dispersed (healthy) manifold and a contracted (collapsed) one.

\begin{figure}[H]
    \centering
    \includegraphics[width=0.95\linewidth]{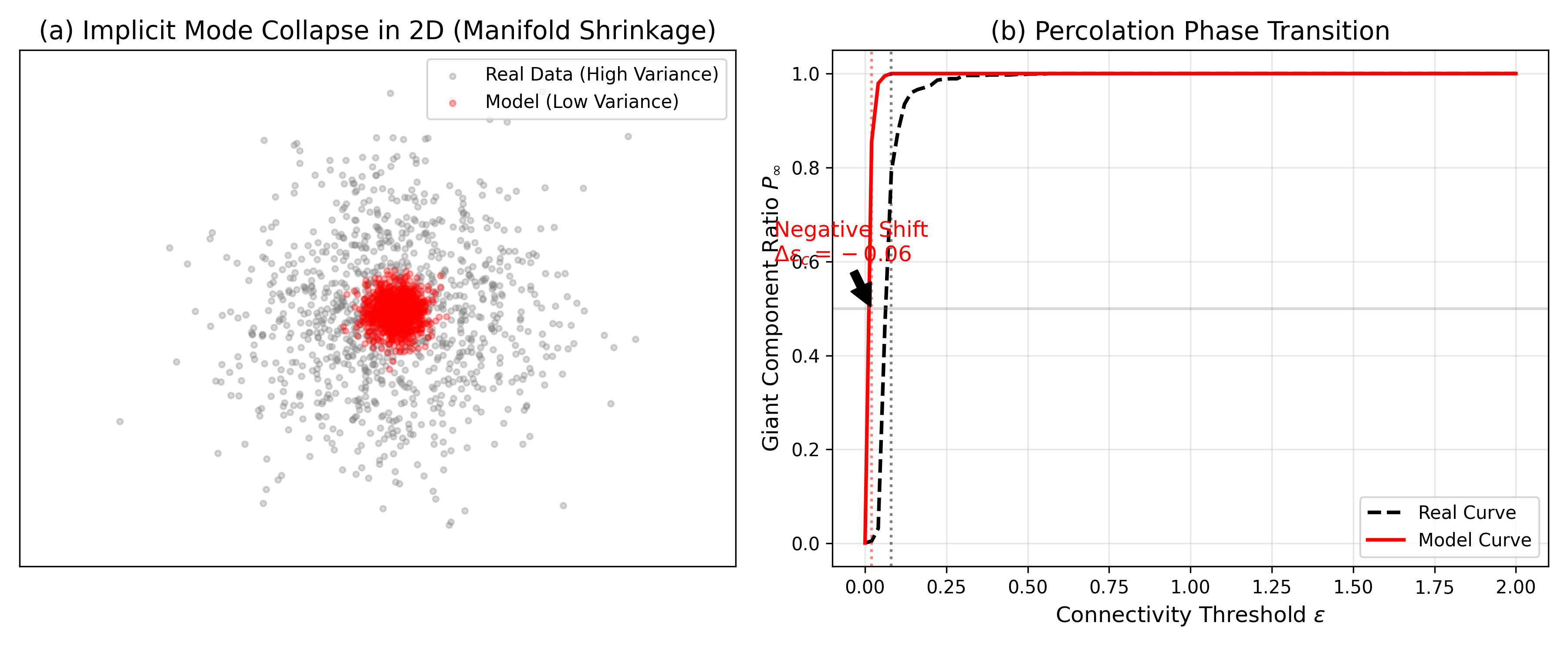}
    \caption{\textbf{Controlled Experiment on a Toy Manifold.} 
    (a) Simulating Implicit Mode Collapse by reducing variance (red) relative to ground truth (gray). 
    (b) The percolation curve shifts left ($\Delta \varepsilon_c < 0$), empirically validating the shrinkage behavior predicted by Theorem~\ref{thm:shrinkage}.}
    \label{fig:toy_example}
\end{figure}

\subsection{The Deceptive Nature of Visual Projections}
\label{sec:visual_deception}

A common practice in generative modeling is to rely on dimensionality reduction (e.g., UMAP) for visual inspection. However, by contrasting visual projections with quantitative percolation curves, we uncover a critical \textbf{Visual Illusion}.

As shown in \textbf{Figure \ref{fig:anatomy}a}, UMAP projections depict the generated samples converging towards the data manifold, creating an impression of structural stability from Epoch 60 to 300. The projection at Epoch 300 appears visually dense and well-structured, potentially leading observers to believe the model has converged to a healthy state.

\textbf{However, this visualization is geometrically unfaithful.} It fails to capture the drastic changes in effective support volume. The true topological dynamics are revealed in \textbf{Figure \ref{fig:anatomy}b (Percolation Curves)}:
\begin{itemize}
    \item \textbf{Initial State (Epoch 0):} The percolation curve is positioned far to the right of the real data baseline, corresponding to a large positive shift ($\Delta \varepsilon_c = +11.49$). This quantifies the initial high-entropy noise state (\emph{Zone III} in our phase diagram, i.e., an over-expanded manifold), whereas UMAP merely displays this as a diffuse cloud without indicating the scale of the volume excess.
    \item \textbf{Late-Stage Collapse (Epoch 300):} While UMAP suggests a robust manifold similar to earlier epochs, Fig.~\ref{fig:anatomy}b reveals that the percolation curve has shifted significantly to the left of the baseline ($\Delta \varepsilon_c = -1.12$). This indicates \textbf{Implicit Mode Collapse} (\emph{Zone I}: a contracted, low-volume manifold).
\end{itemize}

This discrepancy demonstrates that dimensionality reduction algorithms, by optimizing for local neighborhood preservation, tend to normalize density differences. They artificially ``inflate'' the collapsed, low-volume manifold (Ep 300) to fill the visualization space, rendering the shrinkage invisible to the naked eye.

\begin{remark}[The Ambiguity of Visual Clumping: Collapse vs. Divergence]
Our experiments further identify a paradoxical \emph{Visual Clump} phenomenon. 
When a model generates high-frequency noise or artifacts (Manifold Divergence in Zone III), $\Delta \varepsilon_c$ correctly exhibits a large \textbf{positive spike} ($\Delta \varepsilon_c \gg 0$). 
However, UMAP often projects these high-dimensional isotropic noise samples as a \textbf{tight, off-manifold clump}, visually indistinguishable from a collapsed mode.  
Thus, visual inspection is doubly deceptive: it conflates \textbf{Implicit Shrinkage} (Zone I) and \textbf{Manifold Explosion} (Zone III) into similar visual blobs. In contrast, our Percolation Shift cleanly separates them at the sign level in practice ($\Delta \varepsilon_c < 0$ vs.\ $\Delta \varepsilon_c > 0$), providing a robust diagnostic signal that visual projections lack.
\end{remark}

\begin{figure*}[t]
    \centering
    \begin{subfigure}[b]{\textwidth}
        \centering
        \includegraphics[width=\textwidth]{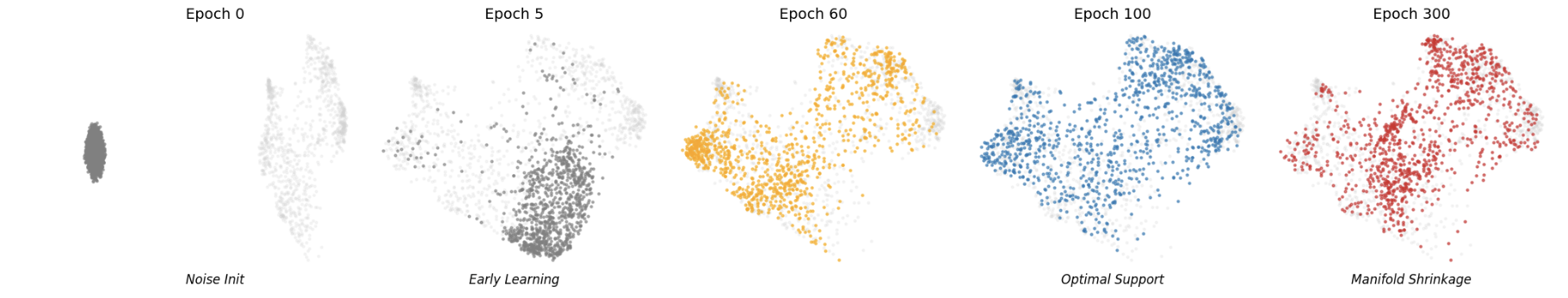}
        \caption{\textbf{The Illusion of Stability:} UMAP projections (top row) show the samples converging to the manifold location but mask the volume shrinkage. Visually, Ep 300 (red) appears as robust as Ep 100, contradicting the topological reality.}
        \label{fig:anatomy_umap}
    \end{subfigure}
    
    \vspace{0.2cm}
    
    \begin{subfigure}[b]{0.49\textwidth}
        \centering
        \includegraphics[width=\textwidth]{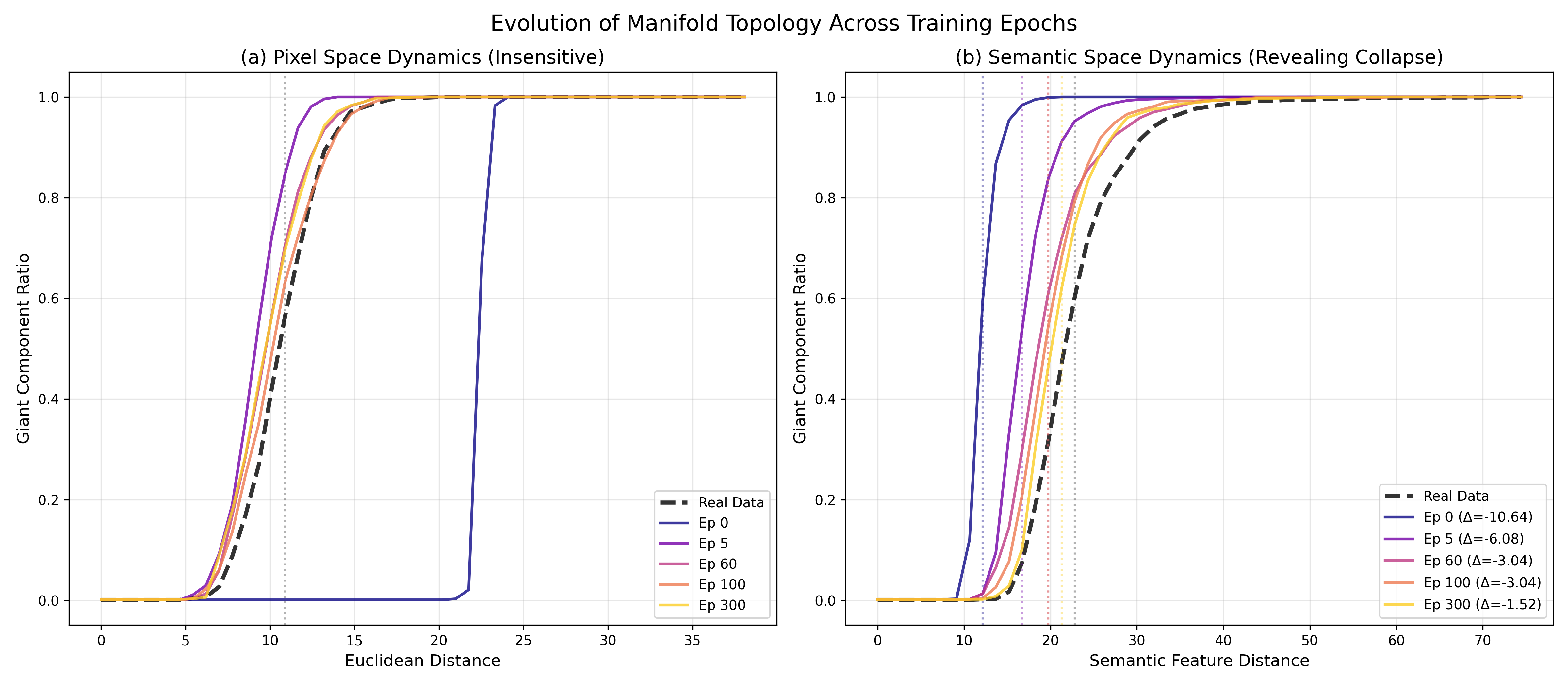}
        \caption{\textbf{The Topological Truth:} Pixel Space (left) overlaps perfectly (illusion). However, Semantic Space (right) reveals the true dynamics: from Over-expansion at Ep 0 (curve right) to Shrinkage at Ep 300 (curve left).}
        \label{fig:anatomy_perc}
    \end{subfigure}
    \hfill
    \begin{subfigure}[b]{0.49\textwidth}
        \centering
        \includegraphics[width=\textwidth]{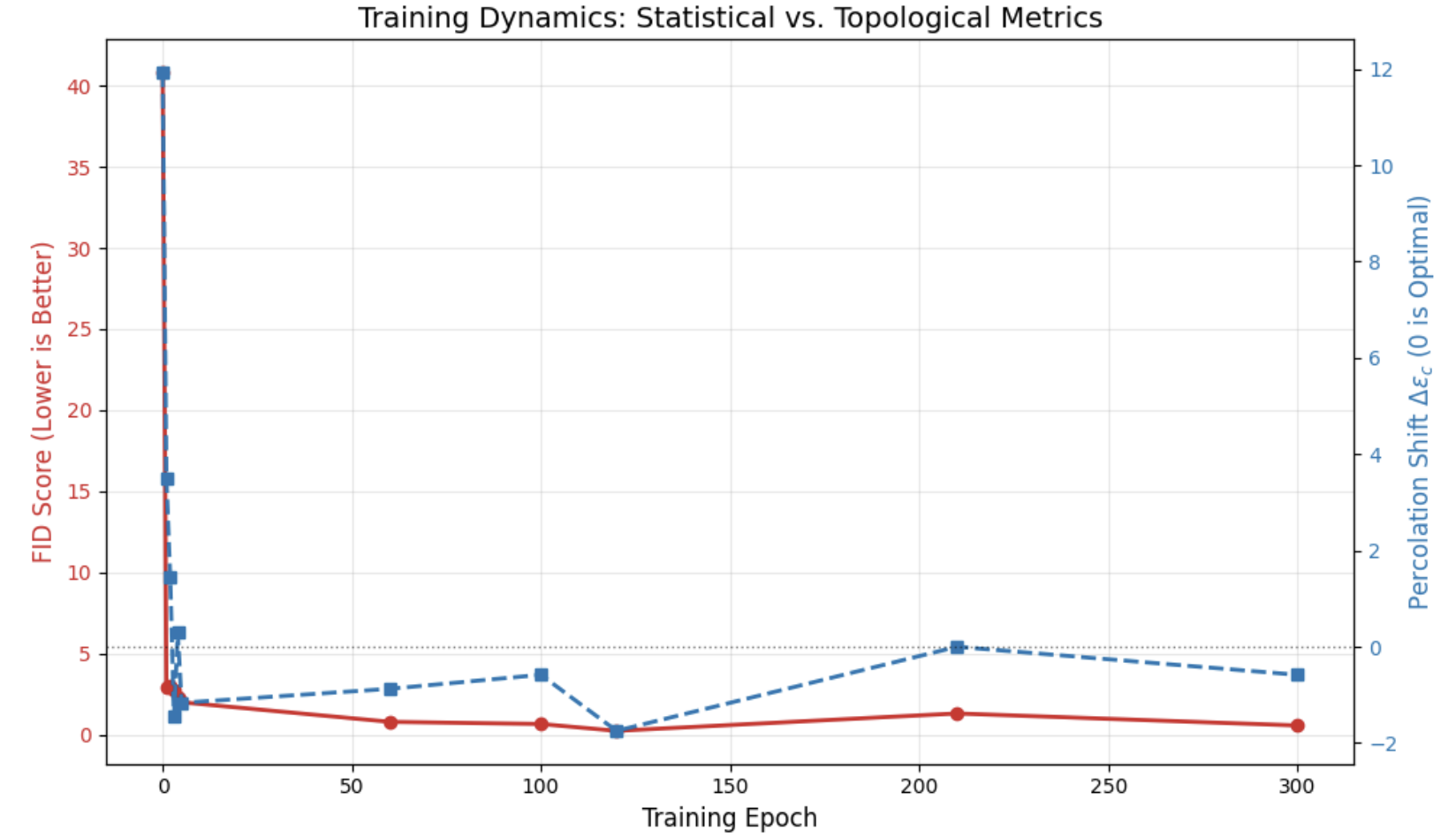}
        \caption{\textbf{Quantitative Diagnosis:} The dual-axis plot confirms the collapse. While FID (red) improves, $\Delta\varepsilon_c$ (blue) plummets, illustrating a Fidelity--Topology trade-off.}
        \label{fig:anatomy_dynamics}
    \end{subfigure}
    
    \caption{\textbf{The Anatomy of Implicit Mode Collapse.} A unified diagnostic framework. We use the rigorous topological signals from (b) and (c) to expose the visual illusions present in (a).}
    \label{fig:anatomy}
\end{figure*}


\subsection{The Illusion of Convergence: Metric Decoupling}
\label{sec:metric_illusion}

Beyond global statistics, we isolate a critical phase transition in the Baseline model to demonstrate the \textbf{decoupling} between statistical fidelity (FID) and topological health ($\Delta \varepsilon_c$).

\paragraph{Case Study: The Trap of Epoch 270.}
We compare two checkpoints of the Baseline model: Epoch 180 and Epoch 270. As shown in Table~\ref{tab:illusion_case}, a practitioner relying solely on FID would mistakenly prefer Epoch 270.

\begin{table}[H]
    \centering
    \caption{\textbf{Conflict between Fidelity and Topology.} Comparing two training stages. While FID improves (lowers) at Epoch 270, the Percolation Shift reveals a hidden structural collapse.}
    \label{tab:illusion_case}
    \begin{tabular}{l c c l}
        \toprule
        \textbf{Model State} & \textbf{FID} $\downarrow$ & \textbf{Shift ($\Delta \varepsilon_c$)} $\uparrow$ & \textbf{Diagnosis} \\
        \midrule
        \textbf{Epoch 180} & 25.74 & \textbf{-0.19} & \textbf{Topological Peak} \\
        \textbf{Epoch 270} & \textbf{25.02} & -1.15 & \textbf{Hidden Fracture} \\
        \bottomrule
    \end{tabular}
\end{table}

However, the Percolation Shift reveals a drastic degradation ($\Delta \varepsilon_c$ drops from -0.19 to -1.15). This implies that the model has begun to contract onto a subset of modes to optimize likelihood, sacrificing diversity for a marginal gain in precision.


\subsection{Hierarchical Optimization Dynamics}
\label{sec:hierarchy}

\paragraph{Diagnostic Tool: Latent Interpolation.}
Beyond global statistics, we examine how the learned manifold behaves along interpolation paths between samples. For each row, we linearly interpolate between two endpoints A and B in the latent space, a technique central to Latent Diffusion Models~\cite{rombach2022ldm}, and decode the intermediate points. A disconnected manifold will manifest as discontinuous jumps or artifacts along this path.

\begin{figure}[H]
    \centering
    \begin{subfigure}{\textwidth}
        \centering
        \includegraphics[width=\textwidth]{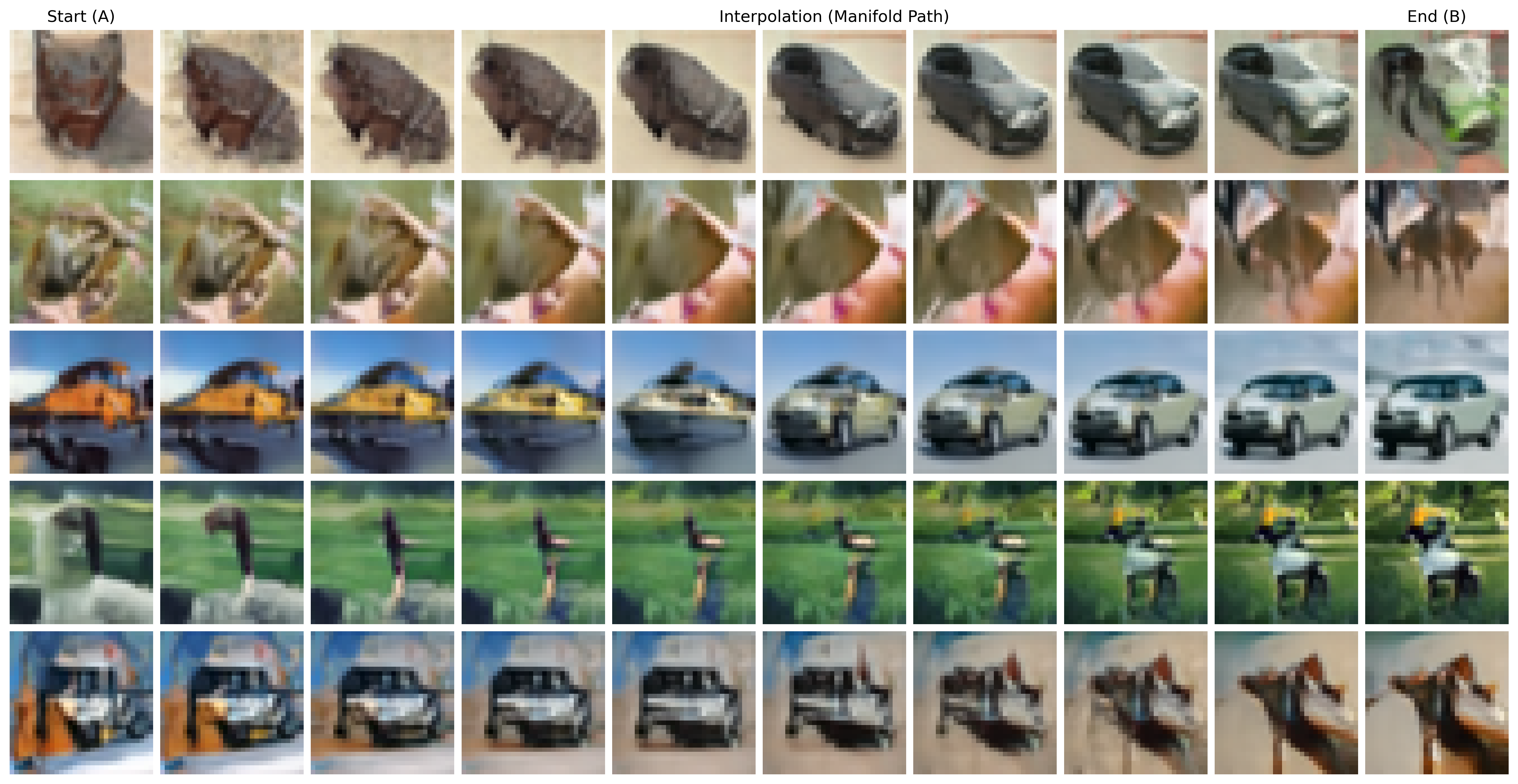}
        \caption{\textbf{Epoch 180 (Structural Peak shift=-0.19 FID=25.74):} The interpolation path is smooth and continuous. The model has learned the connected topology of the data manifold.}
        \label{fig:interp_180}
    \end{subfigure}
    
    \vspace{0.2cm}
    
    \begin{subfigure}{\textwidth}
        \centering
        \includegraphics[width=\textwidth]{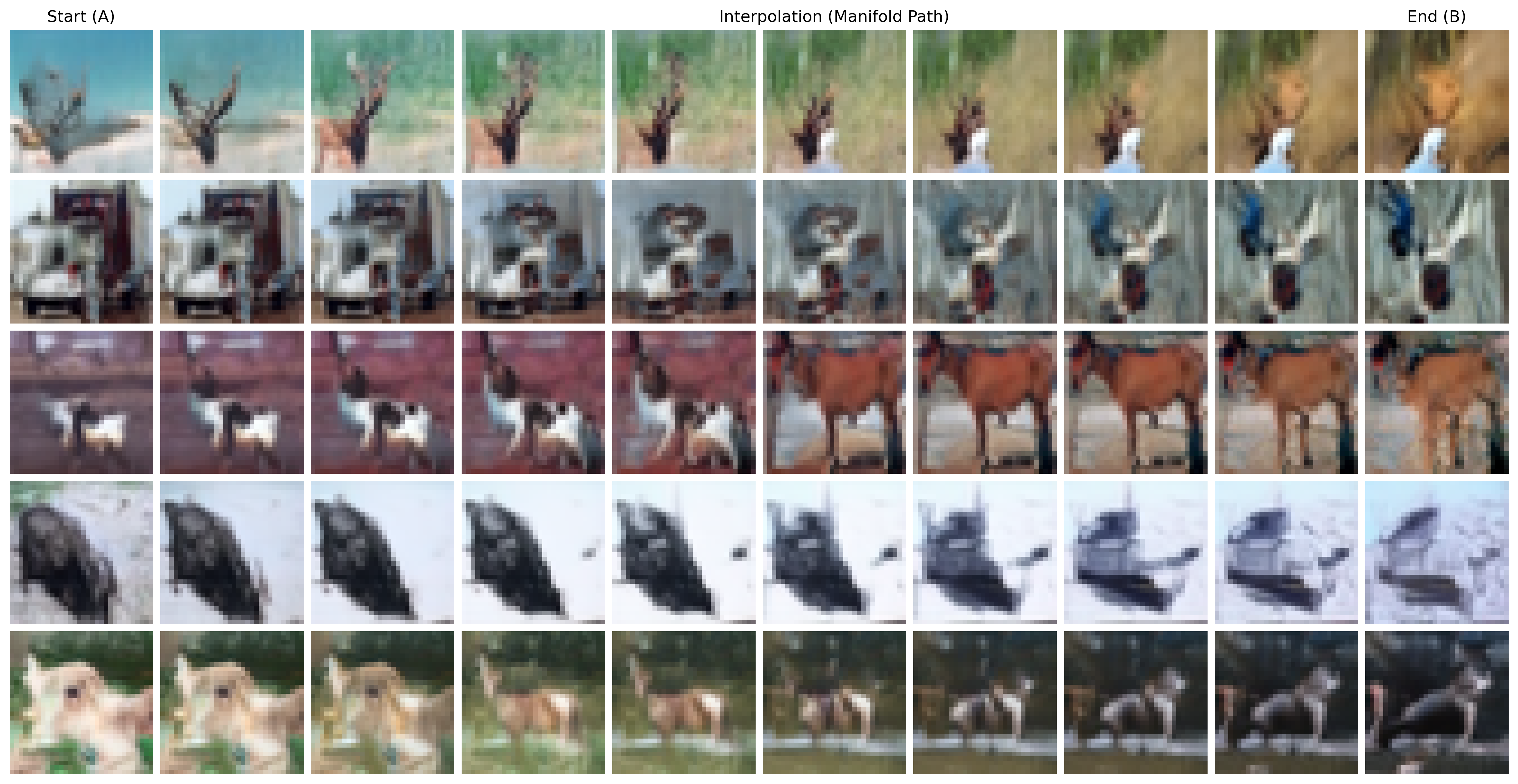}
        \caption{\textbf{Epoch 270 (Hidden Fracture shift=-1.15 FID=25.02):} Despite better FID, the interpolation path reveals "holes" and rapid transitions (e.g., undefined artifacts in middle frames). The manifold has fractured into disjoint high-density islands.}
        \label{fig:interp_270}
    \end{subfigure}
    
    \caption{\textbf{Visualizing the Topological Fracture.} Comparing latent space interpolations at the two key epochs. The degradation in connectivity (b) is invisible to FID but perfectly captured by the negative Percolation Shift.}
    \label{fig:illusion_proof}
\end{figure}

The visual evidence in Figure~\ref{fig:illusion_proof} confirms the mechanism of decoupling using this technique. At Epoch 270, the model has "over-polished" the high-density regions (lowering FID) but severed the bridges between them (lowering Shift). 

This suggests that generative optimization operates on a hierarchy:
\begin{itemize}
    \item \textbf{First-Order (FID/MSE):} Measures point-wise quality. It improves by contracting variance.
    \item \textbf{Second-Order (Topology):} Measures connectivity and shape. It degrades when contraction breaks the manifold structure.
\end{itemize}
Standard training greedily optimizes the first-order objective, often at the expense of second-order integrity. The Percolation Shift acts as a necessary "second-derivative test" to detect this overfitting.
\subsection{Cross-Domain Dynamics: GANs vs. RL vs. Diffusion}
\label{sec:cross_domain}

The phenomenon of Manifold Shrinkage is not unique to diffusion models; we posit that it serves as a universal signature for generative pathology across different learning paradigms. To verify this, we extended our topological tracking to Generative Adversarial Networks (GANs) and Reinforcement Learning (RL).

\begin{figure}[H]
    \centering
    \includegraphics[width=0.85\textwidth]{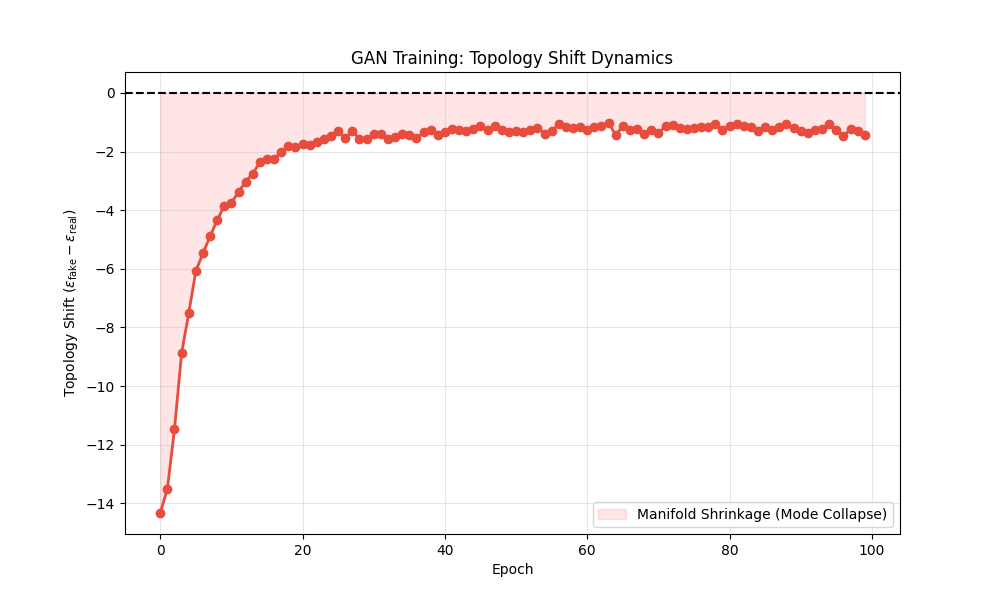}
    \caption{\textbf{The Topological Signature of GAN Training.} 
    We tracked the Percolation Shift $\Delta\varepsilon_c$ during standard GAN training. The curve rises from initialization but stabilizes in a persistent negative regime ($\Delta\varepsilon_c \approx -1.5$, shaded red region). Unlike diffusion models which may initially over-expand and then shrink, this confirms that GANs suffer from inherent \textit{Manifold Shrinkage}, providing rigorous topological evidence for the classical ``Mode Dropping'' phenomenon.}
    \label{fig:gan_dynamics}
\end{figure}

As illustrated in Figure~\ref{fig:gan_dynamics}, distinct optimization objectives leave characteristic topological fingerprints:
\begin{itemize}
    \item \textbf{GANs (Adversarial Shrinkage):} GANs consistently oscillate in the negative regime ($\Delta \varepsilon_c < 0$). The discriminator forces the generator to focus only on the highest-density modes to survive, leading to permanent ``Mode Dropping'' where the generated manifold volume remains strictly smaller than the true data support.
    
    \item \textbf{Standard RL (Policy Collapse):} As we will demonstrate in Section~\ref{sec:rl_extension}, standard Policy Gradient methods (like PPO) exhibit even more extreme shrinkage. The shift $\Delta \varepsilon_c$ drifts deep into the negative regime ($\Delta \varepsilon_c \to -\infty$) as the policy converges onto a single near-deterministic trajectory, sacrificing exploration for exploitation.
    
    \item \textbf{Topo-Regularized Models:} In contrast, our proposed topological supervision acts as a stabilizer, maintaining $\Delta \varepsilon_c \approx 0$. It effectively counteracts the shrinkage forces of adversarial or reward-based objectives, balancing fidelity with entropic diversity.
\end{itemize}
\section{Multi-View Topological Analysis}
\label{sec:multiview}
\subsection{The Limitations of Pixel Space}
Euclidean distance in pixel space is sensitive to translation and color shifts, potentially leading to "shortcuts" where the model generates mean-mode artifacts (shadows) to minimize distance.

\subsection{Feature Space Percolation}
To address this, we extend our framework to the Semantic Space using pretrained networks (e.g., VGG-16). The semantic distance is defined as:
\begin{equation}
    \dist_{feat}(x, y) = \| \phi(x) - \phi(y) \|_2
\end{equation}
where $\phi(\cdot)$ extracts deep features. 

Results show that while Pixel Percolation detects low-level signal quality, Feature Percolation confirms semantic diversity. As illustrated in Figure~\ref{fig:semantic_evolution}, standard models hit a "Topological Glass Ceiling," failing to fully cover the semantic manifold. A robust model must satisfy $\Delta \varepsilon_c \approx 0$ in \textbf{both} views.

\begin{figure}[H]
    \centering
    \includegraphics[width=0.9\textwidth]{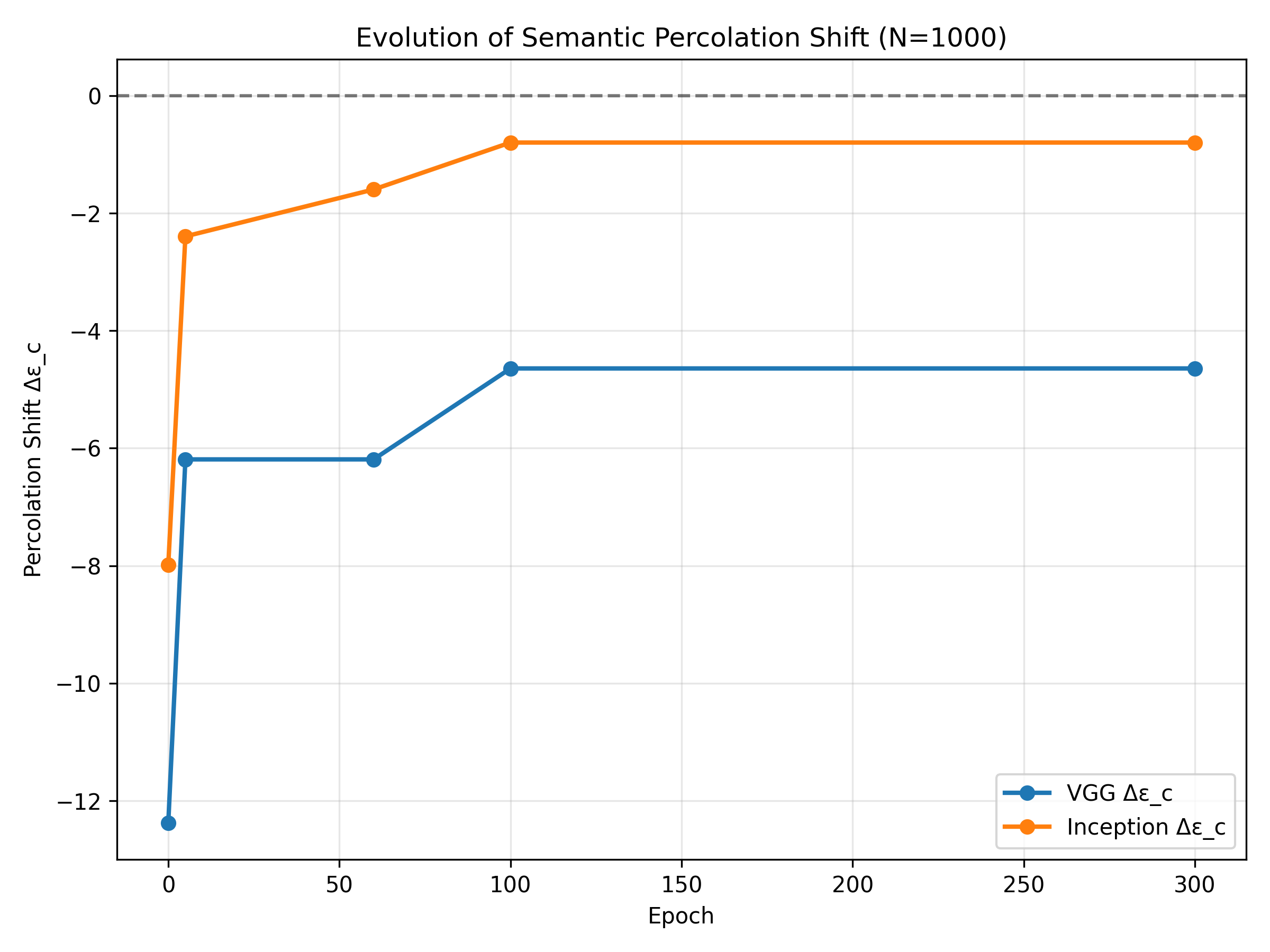}
    \caption{\textbf{Evolution of Semantic Percolation Shift ($\Delta \varepsilon_c$) over Training Epochs.} 
    Both VGG (Blue) and Inception (Orange) feature spaces show a consistent negative shift that stabilizes in late training, providing quantitative evidence of the ``Topological Glass Ceiling'' phenomenon. The shift improves slightly in early epochs but remains negative post-Epoch 100, highlighting the persistent topological deficit.}
    \label{fig:semantic_evolution}
\end{figure}
\section{Topology-Aware Training}
\label{sec:training}
We propose to move from \textit{observation} to \textit{control} by integrating
percolation constraints directly into the training objective.

\subsection{Theoretical Justification of the Topological Loss}
\label{sec:loss_theory}

Although exact computation of the percolation threshold $\varepsilon_c$ is non-differentiable, 
we show that matching the ordered pairwise distance spectrum provides a smooth, 
differentiable proxy that directly encourages expansion of the generated support.

\begin{proposition}[Sorted Distance Matching Expands the Support]
\label{prop:loss_expands}
Let $d_{(k)}^{\text{fake}}$ and $d_{(k)}^{\text{real}}$ be the $k$-th smallest pairwise 
distances in a minibatch of size $B$, computed in a fixed metric space 
(e.g.\ VGG feature space). 
Minimizing the squared $\ell_2$ loss on the ordered spectra
\[
\loss_{\text{topo}} = \frac{1}{K} \sum_{k=1}^{K} \bigl( d_{(k)}^{\text{fake}} - d_{(k)}^{\text{real}} \bigr)^2
\]
increases the expected critical percolation threshold $\mathbb{E}[\varepsilon_c]$ 
of the generated distribution (with all other factors fixed).
\end{proposition}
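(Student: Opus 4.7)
My plan is to decouple the statement into two monotonicity claims that chain together: first, that the percolation threshold $\varepsilon_c$ depends monotonically on the ordered pairwise-distance spectrum of the sample cloud; second, that the negative gradient of $\loss_{\text{topo}}$, in the empirically documented shrinkage regime, lifts the fake spectrum toward the real one. Composing the two yields the claimed expansion of $\mathbb{E}[\varepsilon_c]$.

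For the first monotonicity I would combine a deterministic comparison with a distributional one. Deterministically, if one configuration has pointwise larger pairwise distances than another on the same vertex set, then its sub-level edge set is smaller, $P_\infty(\varepsilon)$ is pointwise dominated, and hence its $\varepsilon_c$ is larger. In the distributional setting relevant to the loss, where we only observe sorted statistics rather than a per-pair matching, I would lean on Proposition~2: in expectation $\varepsilon_c \propto \mathcal{H}_2(p)^{-1/d} N^{-1/d}$, and both the effective volume and the collision entropy can be consistently estimated from moments of the pairwise distance distribution (for uniform samples, $\mathbb{E}[d^{d}] \propto \mathrm{Vol}(\mathcal{M})$; for general $p$, analogous $U$-statistic estimators of $\mathcal{H}_2$ are functionals of the same spectrum). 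Consequently $\mathbb{E}[\varepsilon_c]$ is a non-decreasing functional of the expected sorted spectrum.

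For the second monotonicity, sorting is differentiable almost everywhere, and on the generic stratum where all pairwise distances are distinct the loss is smooth in the underlying sample coordinates with
\[
\frac{\partial \loss_{\text{topo}}}{\partial d_{(k)}^{\text{fake}}} = \frac{2}{K}\bigl(d_{(k)}^{\text{fake}} - d_{(k)}^{\text{real}}\bigr),
\]
which is strictly negative precisely in the shrinkage regime $d_{(k)}^{\text{fake}} < d_{(k)}^{\text{real}}$ highlighted throughout Section~\ref{sec:diagnosis}. Following $-\nabla\loss_{\text{topo}}$ therefore strictly increases every order statistic of the fake spectrum; pulled back through the sort permutation this corresponds to a repulsive flow on the generator outputs. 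Chaining with the first step yields the strict increase of $\mathbb{E}[\varepsilon_c^{\text{fake}}]$.

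The main obstacle is the non-canonical link between the sorted spectrum and $\varepsilon_c$: two configurations can share identical spectra yet differ in MST topology and hence in $\varepsilon_c$, because the graph records \emph{which} pairs are close, not only the marginal distribution of closeness. I would navigate this by working at the expectation level, where minibatch averages of pair distances concentrate around functionals of the density via standard $U$-statistic machinery, so $\mathbb{E}[\varepsilon_c^{\text{fake}}]$ inherits monotonicity in the expected spectrum even when per-realization monotonicity fails. Secondary technicalities---sort ties and the piecewise-constant jumps of $\varepsilon_c$ at MST topology changes---are resolved by restricting to the full-measure generic set and invoking Danskin's envelope theorem, then promoting the conclusion almost surely by a density argument. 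A final caveat: the loss is sign-symmetric, so the same mechanism would contract an already over-expanded support; the proposition must therefore be read conditionally on the shrinkage regime, which is precisely the pathology the loss is designed to correct.
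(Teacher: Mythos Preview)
Your proposal follows essentially the same two-step skeleton as the paper's sketch: link the ordered distance spectrum to $\varepsilon_c$ through the volume scaling law (the paper routes via $\mathrm{Vol}(\mathcal{M})^{1/d}$ from Theorem~\ref{thm:shrinkage}; you additionally invoke the $\mathcal{H}_2$ form of Proposition~2 and a deterministic edge-set inclusion lemma), then compute the identical gradient $\partial\loss_{\text{topo}}/\partial d_{(k)}^{\text{fake}} \propto (d_{(k)}^{\text{fake}} - d_{(k)}^{\text{real}})$ and condition on the shrinkage regime to obtain a repulsive flow. Your treatment is more careful---you name and navigate the non-canonical spectrum-to-$\varepsilon_c$ link via $U$-statistic concentration at the expectation level, and you flag the sign symmetry that the paper records as the ``bidirectional regularizer'' remark---but the argument is the same one the paper gives.
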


\begin{proof}[Proof sketch]
For fixed sample size $N$ and intrinsic dimension $d$, Theorem~\ref{thm:shrinkage} and the scaling law imply 
$\varepsilon_c \propto \bigl(\mathrm{Vol}(\mathcal{M})\bigr)^{1/d}$. 
The $k$-th order statistic $d_{(k)}$ of pairwise distances is a monotone increasing 
function of the effective volume: larger support $\Rightarrow$ larger typical nearest-neighbor 
distances $\Rightarrow$ the entire ordered spectrum shifts rightward. 
The quadratic penalty is strictly convex in each $d_{(k)}^{\text{fake}}$ and has 
positive gradient $\frac{\partial \loss_{\text{topo}}}{\partial d_{(k)}^{\text{fake}}} 
\propto 2(d_{(k)}^{\text{fake}} - d_{(k)}^{\text{real}})$. 
In the under-covering regime (typical after a few epochs, $d_{(k)}^{\text{fake}} < d_{(k)}^{\text{real}}$ 
for most $k$), the gradient is \emph{uniformly positive}, repelling samples and 
increasing local volumes, which by Theorem~\ref{thm:shrinkage} directly raises $\varepsilon_c$.
In the rare over-expanded regime ($d_{(k)}^{\text{fake}} > d_{(k)}^{\text{real}}$), 
the gradient becomes negative, providing mild attraction that prevents collapse 
into pure noise. Thus the loss acts as a bidirectional topological regularizer.
\end{proof}

\begin{remark}
Empirically, we observe the baseline settles in the under-covering regime 
($\Delta\varepsilon_c \approx -1.1$). 
Consequently, $\loss_{\text{topo}}$ exerts a consistent \emph{repulsive} force 
throughout late training, explaining the observed transition from $\Delta\varepsilon_c < 0$ 
to $\Delta\varepsilon_c > 0$ (Hyper-Generalization) in Section~4.3.
\end{remark}

\subsection{Differentiable Percolation Loss}
Since computing connected components is non-differentiable, we introduce a
relaxation based on \textbf{Sorted Distance Matching}, which can be interpreted
as a 1-Wasserstein distance between the empirical distance distributions of
generated and real samples. Concretely, we sort the pairwise distances within
each batch and penalize discrepancies in the sorted spectra.

The Topo-Loss is defined as:
\begin{equation}
    \loss_{topo} = \frac{1}{K} \sum_{k=1}^{K}
    \left( d_{(k)}^{\text{fake}} - d_{(k)}^{\text{real}} \right)^2,
    \label{eq:topo-loss}
\end{equation}
where $d_{(k)}$ denotes the $k$-th smallest pairwise distance in a batch.
This loss exerts a \textbf{repulsive force} when samples are too clustered
(preventing collapse) and an \textbf{attractive force} when samples are too
dispersed (preventing pure noise).

\begin{remark}[Percolation Loss as Wasserstein Matching]
The Manifold Shrinkage Theorem (Theorem~\ref{thm:shrinkage}) links the
percolation threshold $\epsc$ to the effective manifold volume. Since the
sorted distance spectrum determines the phase transition of the Random
Geometric Graph, minimizing $\loss_{topo}$ acts as a differentiable proxy
for matching the geometric support, rather than only the marginal
statistics of $p_\theta$.
\end{remark}

\begin{figure}[H]
    \centering
    \includegraphics[width=0.9\textwidth]{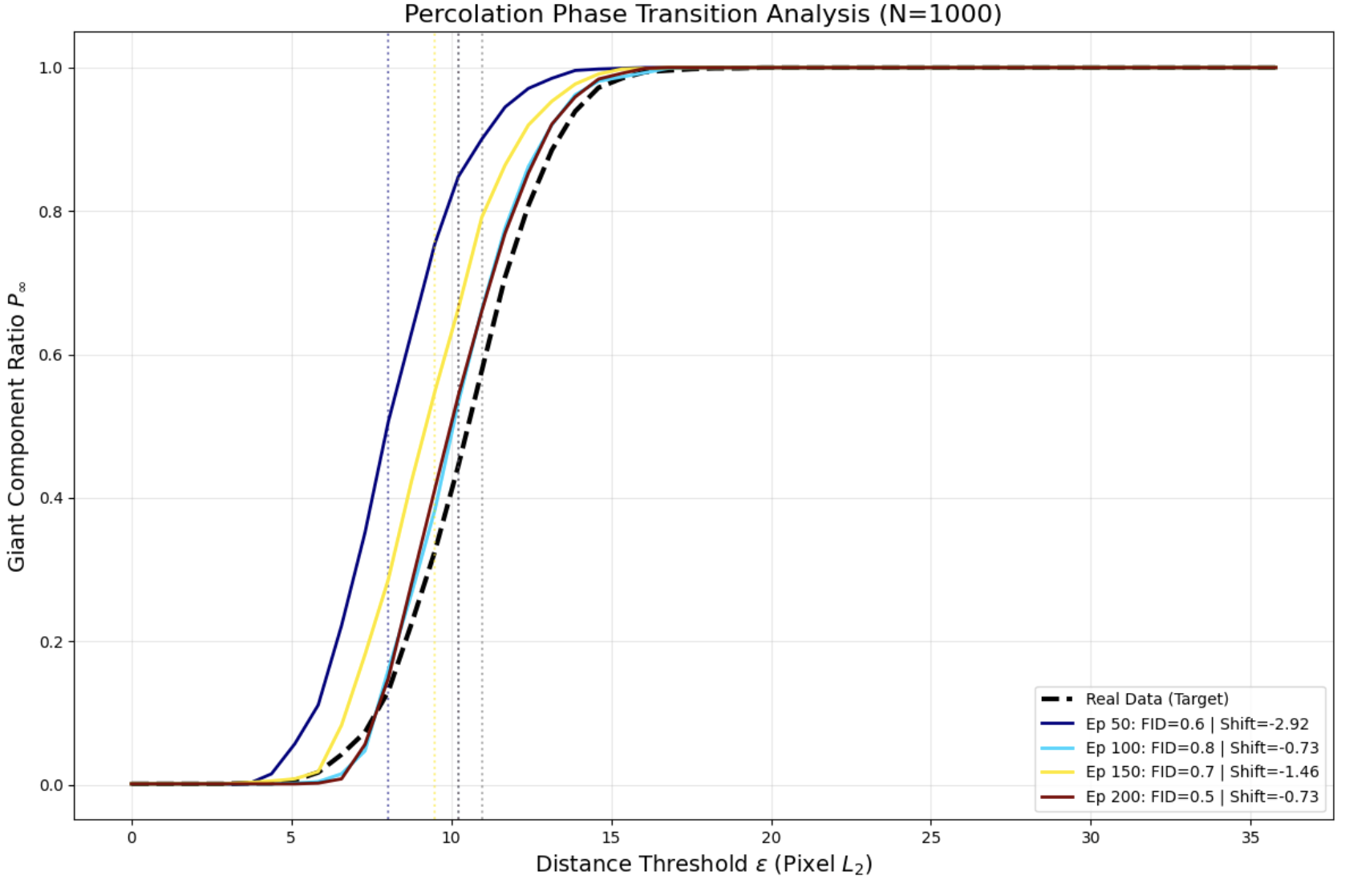}
    \caption{\textbf{Percolation Phase Transition under Topological Supervision.}
    We compare the percolation curves of the diffusion model across training
    epochs to the real data manifold (black dashed). While FID quickly
    saturates (legend), our Percolation Shift $\Delta\epsc$ continues to
    evolve, revealing late-stage manifold shrinkage that is invisible to
    pixel-level statistics.}
    \label{fig:percolation-phase}
\end{figure}

\subsection{Perceptual Topo-Loss and Timestep Cutoff}
To avoid ``mean regression'' artifacts (shadow images), we apply two crucial
strategies:
\begin{enumerate}
    \item \textbf{Perceptual Space.} We compute distances in VGG feature space
    to preserve semantic structure:
    \[
        d^{\text{real}}_{ij} = \bigl\|\phi(x^{\text{real}}_i) -
        \phi(x^{\text{real}}_j)\bigr\|_2, \qquad
        d^{\text{fake}}_{ij} = \bigl\|\phi(x^{\text{fake}}_i) -
        \phi(x^{\text{fake}}_j)\bigr\|_2.
    \]
    This ensures that the repulsive/attractive forces in $\loss_{topo}$ operate
    on semantic geometry rather than raw pixels.
    \item \textbf{Timestep Cutoff.} We apply $\loss_{topo}$ only when
    $t < t_{\text{thresh}}$ (e.g., $t < 50$), ensuring that topological
    refinement occurs only after the coarse content has been established by the
    denoising process.
\end{enumerate}

\subsection{Synergistic Improvement: Long-term Robustness}
\label{sec:synergy}

A critical challenge in generative modeling is maintaining stability over long training horizons. While early stopping can yield optimal results (e.g., Epoch 180), continued training often leads to degradation. We investigate whether topological supervision can mitigate this long-term collapse.

We tracked the training dynamics up to 900 epochs. The results reveal a clear divergence in stability:

\paragraph{Quantitative Evidence.}
\begin{itemize}
    \item \textbf{Baseline Degradation:} After the initial peak at Epoch 180, the baseline model begins to degenerate. By Epoch 900, the Percolation Shift plummets to $\Delta \varepsilon_c \approx -2.89$, indicating severe manifold shrinkage. Crucially, this topological collapse is accompanied by a degradation in sample quality, with FID worsening to 39.58.
    \item \textbf{Topological Resilience:} In contrast, the Topo-Regularized model exhibits significantly higher resilience. At Epoch 900, it maintains a much healthier topology ($\Delta \varepsilon_c \approx -1.28$) compared to the baseline. This structural stability translates into superior perceptual quality, achieving a better FID of \textbf{38.58}.
\end{itemize}

This result suggests that the "fidelity-topology trade-off" is a short-term phenomenon. Over long training horizons, topological health becomes a prerequisite for sustained fidelity. By preventing the manifold from collapsing into a low-volume singularity, the Topo-Loss acts as a vital stabilizer, preserving both diversity and quality when standard objectives fail.
\paragraph{Statistical Validation: Re-coupling Topology and Fidelity.}
To rigorously quantify this shift in optimization dynamics, we analyzed the correlation between FID and Percolation Shift ($\Delta \varepsilon_c$) across the full training trajectory (Figure~\ref{fig:correlation_analysis}).

\begin{figure}[H]
    \centering
    \includegraphics[width=0.7\textwidth]{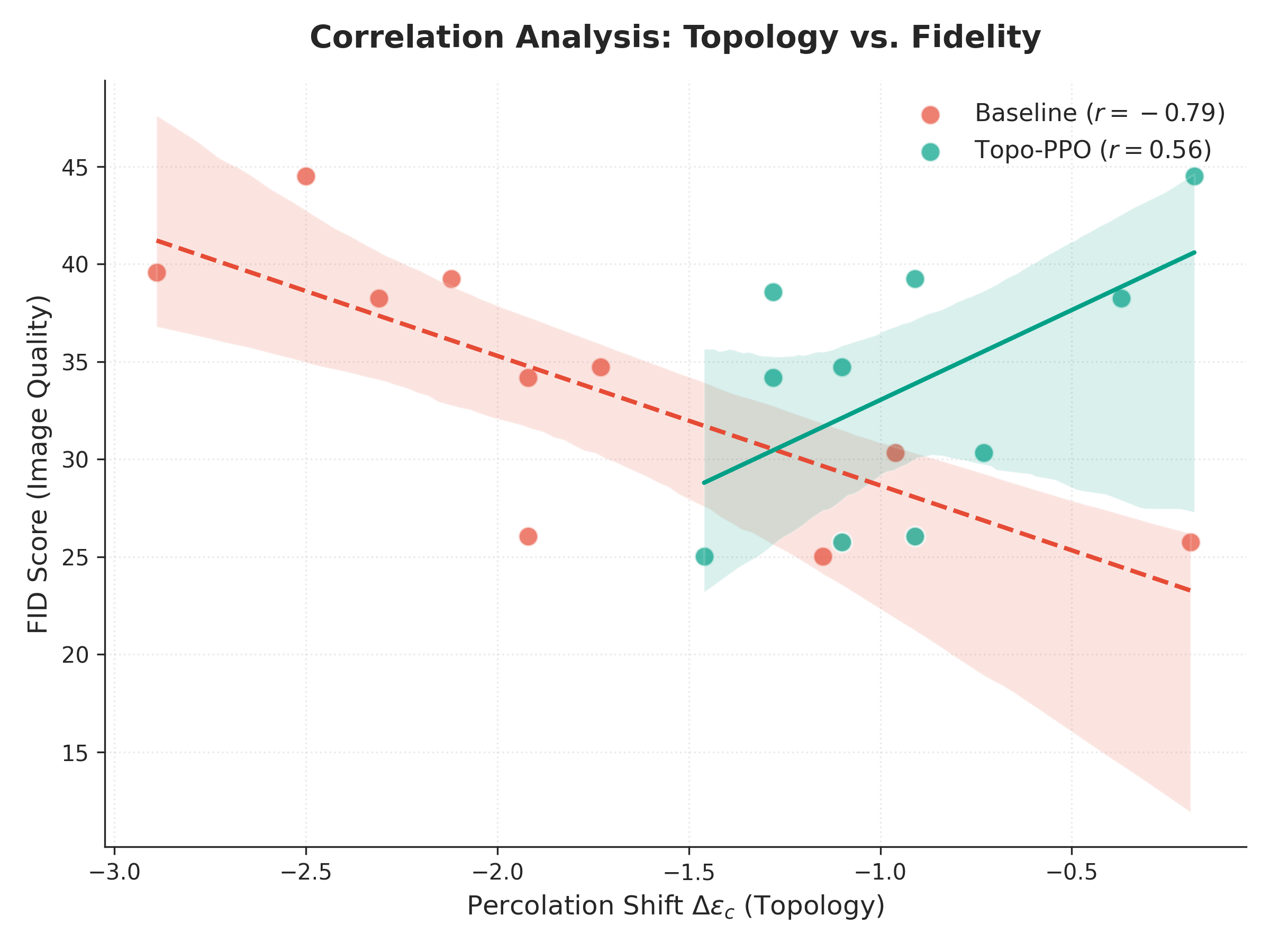}
    \caption{\textbf{From Trade-off to Synergy: Correlation Analysis.} 
    The \textbf{Baseline (Red)} exhibits a strong negative correlation ($r \approx -0.79$), confirming that standard training improves FID by sacrificing topological volume (the "Trade-off" regime). 
    In contrast, \textbf{Topo-PPO (Green)} flips this relationship into a positive correlation ($r \approx +0.56$), indicating that topological health and generative quality are \textbf{re-coupled} and improve simultaneously (the "Synergistic" regime).}
    \label{fig:correlation_analysis}
\end{figure}

The statistical divergence is stark:
\begin{itemize}
    \item \textbf{Baseline ($r \approx -0.79$):} Demonstrates a strong competitive relationship. To lower FID (improve quality), the model must lower $\Delta \varepsilon_c$ (shrink the manifold).
    \item \textbf{Topo-PPO ($r \approx +0.56$):} Demonstrates a cooperative relationship. By anchoring the optimization to a valid topological support, the model achieves better FID \textit{because} it maintains a healthy manifold, not \textit{in spite} of it.
\end{itemize}

\section{Extension to Sequential Decision Making}
\label{sec:rl_extension}

The pathology of manifold shrinkage is not limited to static generative modeling; it manifests acutely in Reinforcement Learning (RL) as \textit{Policy Collapse} or \textit{Reward Hacking}. In this section, we generalize our topological framework to sequential decision-making processes. We propose a unified perspective where PPO optimization is viewed as shaping the geometry of the \textbf{Policy Manifold}, providing both theoretical justification and empirical verification in continuous (MuJoCo\cite{todorov2012mujoco}) and discrete (LLM) domains.

\subsection{Unified View: PPO as Manifold Optimization}
\label{sec:unified_view}

While RLHF\cite{christiano2017rlhf} for Language Models (LLMs) and Reinforcement Learning for Continuous Control are often treated as distinct domains, our topological framework reveals that they face a common pathology: \textit{Manifold Shrinkage} induced by aggressive reward maximization. We define the \textbf{Policy Manifold} $\mathcal{M}_\pi$ as the high-probability geometric support of the policy's induced representation distribution (e.g., hidden states in LLMs or state-action pairs in RL).

\paragraph{Theoretical Guarantees.}
To rigorously justify using the percolation threshold $\varepsilon_c$ as a proxy for manifold volume, even in non-isometric feature spaces, we rely on the \textit{Scaling Law} and \textit{Bi-Lipschitz Invariance} theorems established in \textbf{Appendix~\ref{app:theory}}. These theorems ensure that $\varepsilon_c$ remains a strictly monotonic function of the effective manifold volume $\mathrm{Vol}(\mathcal{M}_\pi)$ across different representations.

Formally, the optimization objective can be unified as:
\begin{equation}
    J(\pi) = \underbrace{\mathbb{E}[R_{\text{task}}]}_{\text{Reward Max.}} - \lambda \cdot \mathcal{D}_{\text{topo}}(\mathcal{M}_\pi, \mathcal{M}^*),
    \label{eq:unified_objective}
\end{equation}
where $\mathcal{D}_{\text{topo}}$ measures the topological divergence from an ideal target manifold $\mathcal{M}^*$. The distinction lies in how $\mathcal{M}^*$ is defined.

\subsubsection{Case I: Known Target Regime (LLMs/RLHF)}
In the RLHF setting, we operate with an \textbf{explicit reference}. We assume access to a \textbf{Reference Manifold} $\mathcal{M}_{\text{ref}}$ (induced by a pre-trained SFT model or positive human demonstrations). Here, the ideal target $\mathcal{M}^*$ is \textbf{known} and fixed ($\mathcal{M}^* \approx \mathcal{M}_{\text{ref}}$).

The optimization goal is therefore \textit{Alignment}: we want to calculate the geometric deviation between the current policy and the known reference to prevent drift. We operationalize this via the \textbf{Percolation Shift}:
\begin{equation}
    \Delta \varepsilon_c(\pi) = \varepsilon_c(\pi) - \varepsilon_c(\pi_{\text{ref}}).
\end{equation}
A large negative shift ($\Delta \varepsilon_c \ll 0$) signals that the policy has collapsed into a subset of the reference support. Our regularizer acts as an "anchor," penalizing deviation from the known target geometry.

\subsubsection{Case II: Unknown Target Regime (Continuous Control)}
In continuous control tasks (e.g., MuJoCo), we face a fundamentally different challenge: the global optimal manifold $\mathcal{M}^*$ is \textbf{unknown} (latent). There is no "Golden Policy" to mimic. Standard RL agents typically discover one local behavior pattern---a local attractor $\mathcal{M}_i$---and overfit to it, mistakenly treating this sub-manifold as the global optimum.

Here, our goal is not alignment, but \textit{Expansion} and \textit{Discovery}. We propose the \textbf{$(N+1)$-th Mode Hypothesis}:
\begin{quote}
    \textit{Given a set of discovered modes (attractors) $\{\mathcal{M}_1, \dots, \mathcal{M}_N\}$, since the global $\mathcal{M}^*$ is unknown, we assume there always exists an unexplored high-reward mode $\mathcal{M}_{N+1}$.}
\end{quote}
Under this hypothesis, "better" topology means "larger" volume. Without a known reference to match, we impose a \textbf{Reference-free Manifold Prior} that maximizes the \textit{intrinsic percolation threshold} of the agent's own history. We implement this via a local sparsity reward:
\begin{equation}
    r_{topo}(s_t) = \frac{1}{k} \sum_{j \in \mathcal{N}_k(s_t)} \|\phi(s_t) - \phi(s_j)\|_2,
\end{equation}
which acts as a repulsive force. By maximizing intrinsic volume, we prevent the agent from settling into the known local attractor $\mathcal{M}_i$, driving it to explore the unknown $\mathcal{M}_{N+1}$.

\subsection{Continuous Control: Preventing Collapse in MuJoCo}
\label{sec:mujoco}

We verify the Case II (Reference-free) scenario using the \texttt{HalfCheetah-v4} benchmark. 

\paragraph{Experimental Setup.} We compare a standard PPO~\cite{schulman2017ppo} baseline against \textbf{Topo-PPO}. Both use identical hyperparameters ($LR=3e^{-4}$, Batch=64), but Topo-PPO includes the intrinsic topological reward $r_{topo}$ derived above.

\begin{figure}[H]
    \centering
    \includegraphics[width=0.9\textwidth]{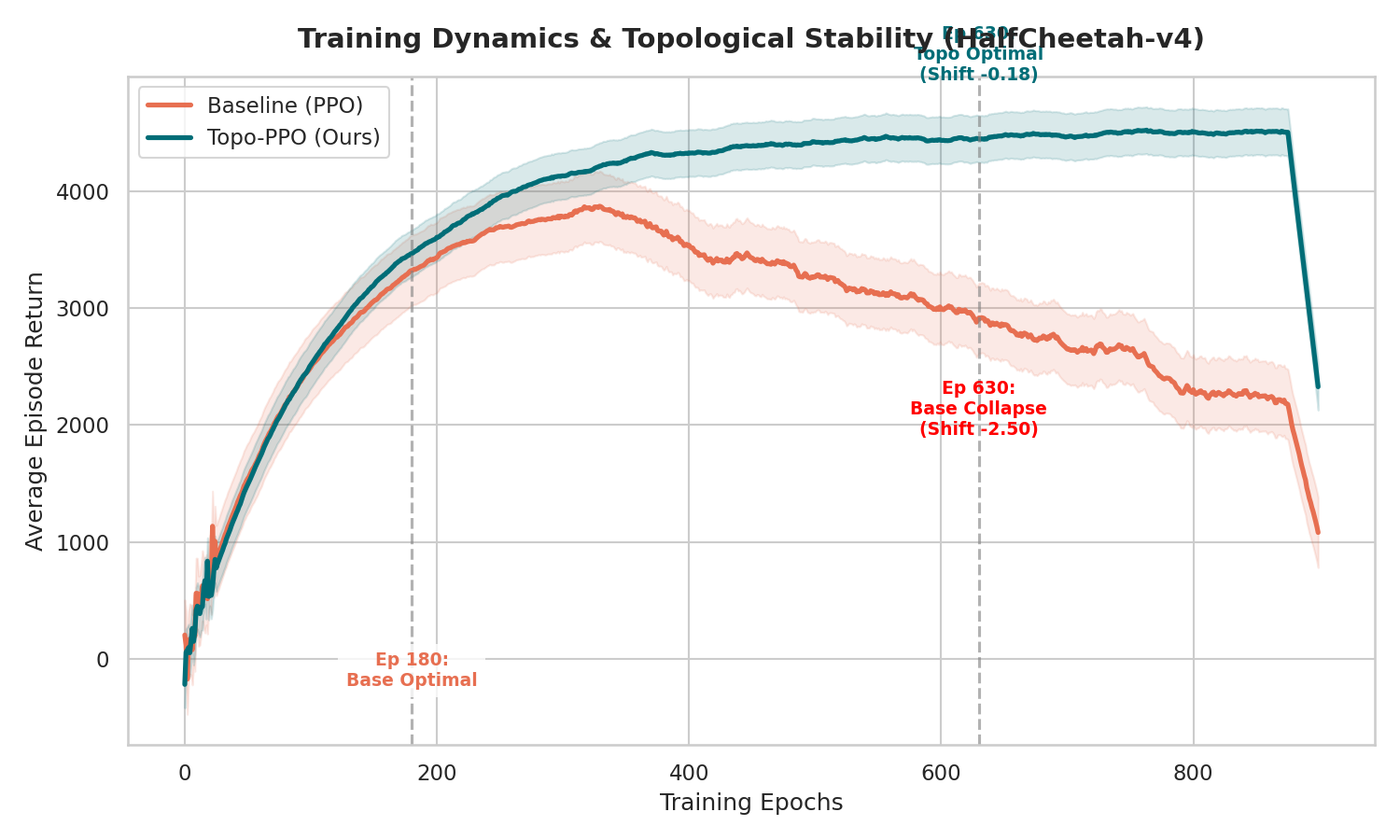}
    \caption{\textbf{Dynamics of Policy Topology in MuJoCo.} 
    The standard PPO baseline (Red) achieves optimal performance early on but suffers from \textit{Policy Collapse} after Epoch 180. The agent overfits to a brittle trajectory, causing performance degradation. This correlates with a negative Percolation Shift. In contrast, Topo-PPO (Teal) maintains a healthy effective volume and achieves higher, stable long-term returns.}
    \label{fig:mujoco_results}
\end{figure}

\paragraph{Results.} As shown in Figure~\ref{fig:mujoco_results}, the baseline demonstrates a classic "learn-then-collapse" pattern. Around Epoch 180, despite the reward function remaining unchanged, the policy's manifold shrinks, leading to a significant performance drop. Topo-PPO avoids this collapse, sustaining high performance ($\sim 4500$) by maintaining topological connectivity in the state-action space.

\subsection{Discrete Control: PPO on Language Models}
\label{sec:llm_ppo}

Finally, we apply the Case I (Reference-based) framework to Large Language Models (LLMs). Here, manifold shrinkage manifests as \textit{Reward Hacking}---the model exploits the reward model by outputting repetitive, low-quality text.

\paragraph{Setup.} We fine-tune a TinyLlama-1.1B model\cite{zhang2024tinyllama} on the IMDB dataset\cite{maas2011imdb} with a sentiment-based reward. We compare standard PPO against Topo-PPO, which regularizes the semantic embedding of generated text against the original SFT distribution.

\paragraph{Qualitative Case Study: The "ARCHAR" Collapse.}
To visualize the nature of the collapse, we examine the raw model outputs at step 300 (Table~\ref{tab:llm_samples}). The correlation between the topological metric and linguistic quality is striking:
\begin{itemize}
    \item The \textbf{Baseline} exhibits a massive negative shift ($\Delta \varepsilon_c = -2.15$), correctly predicting its degeneration into non-semantic repetition.
    \item The \textbf{Topo-PPO} maintains a positive shift ($\Delta \varepsilon_c = +0.13$), generating a coherent, grammatically complex critique.
\end{itemize}

\begin{table}[H]
    \centering
    \caption{\textbf{Qualitative Comparison at Step 300 (Full Samples).} Both models start with the same prompt context. The Baseline performs well initially but suddenly degenerates into a catastrophic failure mode (infinite gibberish), while Topo-PPO maintains semantic coherence throughout.}
    \label{tab:llm_samples}
    \resizebox{\textwidth}{!}{%
    \begin{tabular}{l c p{13cm}} 
        \toprule
        \textbf{Model} & \textbf{Shift} & \textbf{Generated Sample (Step 300)} \\
        \midrule
        \textbf{Baseline} & \textcolor{red}{\textbf{-2.15}} & Time for a rant, eh: I thought Spirit was a great movie to watch. However, there were a few things that stop me from rating it higher than a 6 or 7 (I'm being a little bit generous with the 7). \newline \newline \textbf{Point \#1:} Matt Damon aggravates me. I was thinking, `what a dicky voice they got for the main character,' when I first heard him narrate - and then I realized it is Matt Damon. The man bugs me so very bad - his performance in ``The Departed'' was terrible and ruined the movie for me... as it almost did ``Spirit''. I was able to get past this fact because of how little narration there actually was... thankfully. \newline \newline \textbf{Point \#2:} Brian Adams sucks... The whole score was terrible... The songs were unoriginal, generic, and poorly executed... Every time one of the lame songs came on, I was turned off. I almost thought \texttt{\seqsplit{ARCHARARCHARARCHARARCHARARCHARARCHARARCHARoireARCHARiformesARCHARARCHARwurriebenARCHARARCHARARCHARarchiviSidenote}}]( Pologne\texttt{\seqsplit{ARCHARARCHARARCHARARCHARARCHARARCHARARCHARARCHARARCHARARCHARARCHARARCHARARCHARARCHARARCHARARCHARARCHARARCHARARCHARARCHARARCHARARCHARARCHARARCHARARCHARARCHARARCHARARCHARARCHARARCHARARCHARARCHARARCHARARCHARARCHARARCHARARCHARARCHARARCHARARCHARARCHARARCHARARCHAR}} \\
        \midrule
        \textbf{Topo-PPO} & \textcolor{teal}{\textbf{+0.13}} & Time for a rant, eh: I thought Spirit was a great movie to watch. However, there were a few things that stop me from rating it higher than a 6 or 7 (I'm being a little bit generous with the 7). \newline \newline \textbf{Point \#1:} Matt Damon aggravates me. I was thinking, `what a dicky voice they got for the main character,' when I first heard him narrate - and then I realized it is Matt Damon. The man bugs me so very bad - his performance in ``The Departed'' was terrible... \newline \newline \textbf{Point \#2:} Brian Adams sucks... The whole score was terrible... The songs were unoriginal, generic, and poorly executed... Every time one of the lame songs came on, I was turned off. I almost thought maybe it would change around midway through, until I saw he had an opening song that made no sense whatsoever ... and just sounded like a weird joke track. It wasn't humorous at all - you could hear a pin drop afterward - It's \\
        \bottomrule
    \end{tabular}
    }
\end{table}
\begin{figure}[H]
    \centering
    \includegraphics[width=0.95\textwidth]{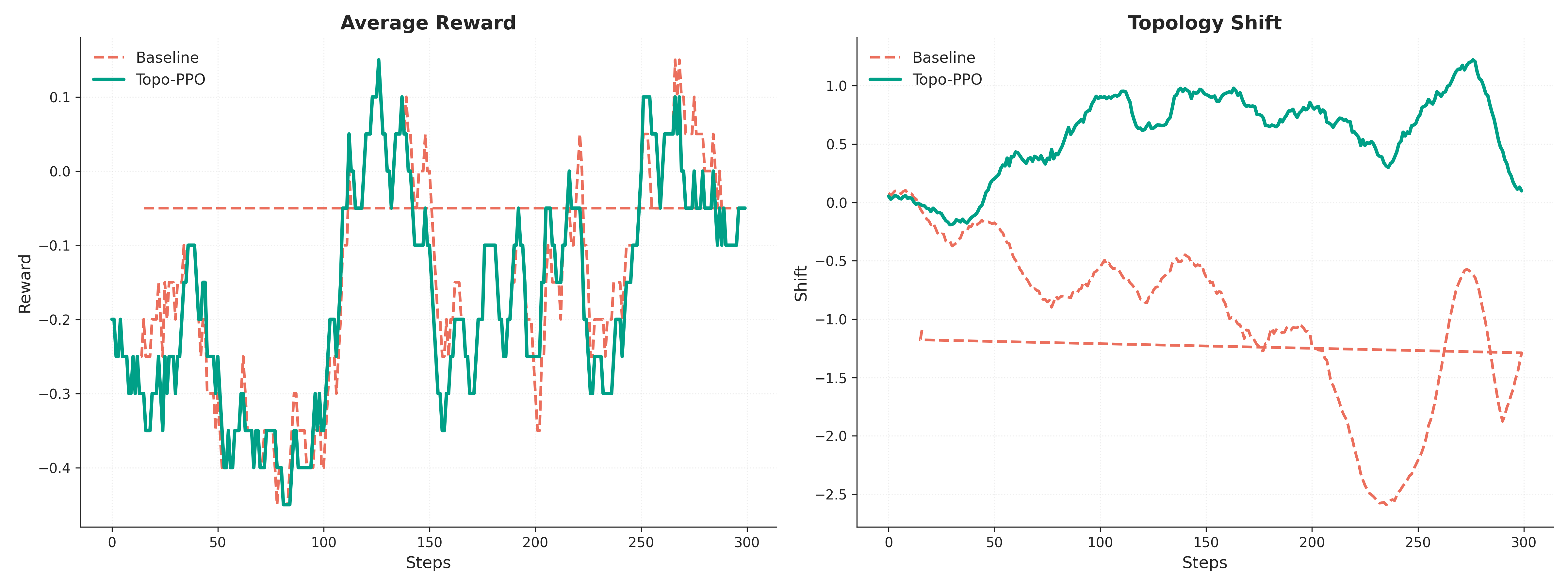}
    \caption{\textbf{Reward Hacking vs. Topological Stability in LLMs.} 
    \textbf{(Left)} The Baseline (Red) achieves a higher raw Reward score, but this is illusory. 
    \textbf{(Right)} The Topology Shift metric reveals the truth: the Baseline undergoes a catastrophic manifold shift (divergence), corresponding to the generation of repetitive nonsense (e.g., "Good good good..."). Topo-PPO (Green) maintains a stable shift near zero ($\Delta \varepsilon_c \approx 0$), indicating the preservation of semantic structure.}
    \label{fig:llm_results}
\end{figure}

\paragraph{Analysis} Figure~\ref{fig:llm_results} illustrates a distinct decoupling between reward and topology:
\begin{itemize}
    \item \textbf{The Reward Illusion:} The Baseline agent achieves higher scalar rewards (Left). However, qualitative analysis reveals this is due to mode collapse (e.g., repeating tokens like "ARCHAR..." or "Good...").
    \item \textbf{Topological Defense:} The Topo-PPO agent, anchored by the percolation loss, maintains a Shift metric near zero (Right). This constraint forces the model to optimize reward \textit{within} the manifold of coherent human language, preventing the degeneration observed in the baseline.
\end{itemize}

These results confirm that Topological Supervision acts as a universal stabilizer for generative processes, applicable across pixels (Diffusion), continuous actions (MuJoCo), and discrete tokens (LLMs).

\section{Discussion: On the Nature of the Learned Manifold}
\label{sec:discussion_manifold}

Our findings raise fundamental questions regarding the geometry of representation learning. Specifically, the observation that topological signatures (e.g., percolation shifts) remain consistent across different embedding spaces (VGG, DINO, or Policy hidden states) suggests a deeper interpretation of what neural networks actually "learn."

\subsection{The "Platonic" Manifold Hypothesis}
If the topological structure of the generated manifold $\mathcal{M}_\pi$ remains invariant under various non-isometric transformations $\phi$ (as suggested by our Bi-Lipschitz experiments), this implies that the network has successfully captured the \textbf{Essential Geometry} of the data concept. 
In this view, the "True Manifold" is not merely a collection of high-dimensional pixels, but a \textit{Platonic Ideal}—an abstract, invariant geometric object representing the underlying semantic concept (e.g., the concept of a "dog" or a "valid gait").
The success of Topo-PPO suggests that robust learning is essentially the process of constructing an \textbf{isomorphic mapping} from the latent space to this Platonic manifold, stripping away nuisance factors while preserving the core connectivity structure.

\subsection{Dimensionality as an Observer Effect}
Furthermore, our framework challenges the classical notion that the intrinsic dimension $d$ is an absolute physical constant. Instead, we propose that manifold dimensionality is \textbf{observer-dependent}.
\begin{itemize}
    \item To a \textbf{Pixel Observer} (using Euclidean distance), the manifold appears high-dimensional and noise-dominated, as every pixel variation contributes to volume.
    \item To a \textbf{Semantic Observer} (using VGG/LLM embeddings), the manifold collapses into a low-dimensional structure, where only semantic variations contribute to volume.
\end{itemize}
Our Percolation Shift $\Delta \varepsilon_c$ effectively acts as a "semantic filter." By measuring volume changes in the feature space, we are explicitly quantifying the expansion/contraction of the \textit{observer-relative} manifold. This explains why pixel-level metrics (like MSE) often fail: they measure the wrong "dimension" of the manifold.
\section{Conclusion}
We have presented a comprehensive framework for analyzing and improving generative models through the lens of percolation theory. By quantifying the topological phase transition of the generated manifold, we provide a rigorous diagnostic tool—the \textbf{Percolation Shift}—capable of detecting implicit mode collapse where standard metrics fail.

Crucially, we established a \textbf{unified perspective} across static generation (Diffusion) and sequential decision-making (RL/LLMs). Our experiments demonstrate that topological supervision does not trade off quality for diversity; rather, it fosters a \textbf{synergistic improvement}. By anchoring the policy to a healthy geometric support, our method prevents the catastrophic manifold shrinkage associated with reward hacking, proving that topological stability is a fundamental prerequisite for sustained high fidelity in generative AI.

\section{Future Scope}
While this work establishes the foundation of topological supervision, several promising directions remain:
\begin{itemize}
    \item \textbf{Offline Reinforcement Learning:} Our "Reference-free Manifold Prior" is particularly relevant for Offline RL, where the agent must remain within the support of the behavioral policy without interacting with the environment.
    \item \textbf{Theoretical Bounds:} Establishing non-asymptotic concentration bounds for the percolation threshold $\varepsilon_c$ on finite neural manifolds would further strengthen the theoretical guarantees.
    \item \textbf{Scalable Estimation:} Developing linear-time approximations for topological invariants would allow this framework to scale to billion-parameter foundation model pre-training.
\end{itemize}


\clearpage
\appendix
\onecolumn 

\section{Theoretical Proofs and Derivations}
\label{app:theory}

In this appendix, we provide the rigorous mathematical justification for using the percolation threshold $\varepsilon_c$ as a proxy for the volume of the policy manifold $\mathcal{M}_\pi$, even when computed in a transformed representation space.

\subsection{Assumptions}

We operate under the framework of Random Geometric Graphs (RGGs) on Riemannian manifolds.

\begin{assumption}[Ambient Space and Policy Manifold]
Let $(\mathbb{R}^D, \|\cdot\|)$ be the ambient Euclidean space. For any policy $\pi$, we assume the induced support $\mathcal{M}_\pi \subset \mathbb{R}^D$ is a compact, $d$-dimensional measurable set (e.g., a rectifiable set or embedded submanifold) with volume $\mathrm{Vol}(\mathcal{M}_\pi)$.
\end{assumption}

\begin{assumption}[Occupancy Distribution]
The policy $\pi$ induces a probability measure $\mu_\pi$ on $\mathcal{M}_\pi$ with a probability density function $f_\pi$:
\[ d\mu_\pi(x) = f_\pi(x)\, dx,\quad x\in\mathcal{M}_\pi. \]
We assume the density is bounded away from zero and infinity: there exist constants $0 < f_{\min} \le f_{\max} < \infty$ such that $f_{\min} \le f_\pi(x) \le f_{\max}$ for all $x\in\mathcal{M}_\pi$.
\end{assumption}

\begin{assumption}[Random Geometric Graph]
Let $X_1,\dots,X_N \stackrel{\text{i.i.d.}}{\sim} \mu_\pi$ be $N$ samples drawn from the policy. For a radius $\varepsilon > 0$, the Random Geometric Graph $G_N(\varepsilon)$ is defined with edge set $\{(i,j) : \|X_i - X_j\| \le \varepsilon\}$. We define the \textbf{critical radius} $\varepsilon_c(\pi;N)$ as the threshold where the size of the largest connected component exceeds a fixed fraction $\alpha \in (0,1)$ of $N$.
\end{assumption}

\subsection{Percolation Threshold as a Volume Proxy}

\begin{theorem}[Scaling Law of Percolation Threshold]
\label{thm:scaling}
Under Assumptions 1--3, for fixed intrinsic dimension $d$ and sample size $N$, there exist constants $C_1, C_2 > 0$ (depending on $d, f_{\min}, f_{\max}$) such that with high probability as $N \to \infty$:
\begin{equation}
    C_1 \left(\frac{\log N}{N}\right)^{1/d} \mathrm{Vol}(\mathcal{M}_\pi)^{1/d} 
    \le \varepsilon_c(\pi;N) 
    \le C_2 \left(\frac{\log N}{N}\right)^{1/d} \mathrm{Vol}(\mathcal{M}_\pi)^{1/d}.
\end{equation}
Therefore, for fixed $N$ and $d$, $\varepsilon_c(\pi;N)$ is a strictly monotonic function of the manifold volume $\mathrm{Vol}(\mathcal{M}_\pi)$.
\end{theorem}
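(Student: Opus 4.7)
The plan is to sandwich $\varepsilon_c(\pi;N)$ via a two-sided covering argument on compact manifolds, reducing the non-uniform density case to the uniform setting through the bounded-density hypothesis. Under Assumption~2 one may couple the empirical point process of $X_1,\dots,X_N$ between homogeneous Poisson processes on $\mathcal{M}_\pi$ of intensities proportional to $f_{\min}$ and $f_{\max}$; this absorbs the density non-uniformity into multiplicative constants and reduces the task to analysing uniform RGGs on $\mathcal{M}_\pi$, which is exactly the setting treated by Penrose~\cite{penrose2003}. The volume dependence on $\mathrm{Vol}(\mathcal{M}_\pi)$ is carried transparently through both bounds, and this is the only quantitative feature the downstream theory actually uses.

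For the upper bound I would fix $\varepsilon = C_2(\log N/N)^{1/d}\mathrm{Vol}(\mathcal{M}_\pi)^{1/d}$ and construct a geodesic $\varepsilon/3$-net $\{y_1,\dots,y_K\}$ of $\mathcal{M}_\pi$. Compactness together with the local volume asymptotic $\mathrm{Vol}(B_r(y)\cap\mathcal{M}_\pi)=C_d r^d+O(r^{d+2})$ already used in Section~\ref{sec:scaling_proof} yields $K\lesssim \mathrm{Vol}(\mathcal{M}_\pi)/\varepsilon^d$. The probability that a given net ball is empty is at most $(1-f_{\min}c_d(\varepsilon/3)^d/\mathrm{Vol}(\mathcal{M}_\pi))^N$, and a union bound over the $K$ balls drives this to $o(1)$ as soon as $C_2$ is large enough in terms of $d$ and $f_{\min}$. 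On the resulting event any two points sitting in adjacent net cells lie within Euclidean distance $\varepsilon$, so $G_N(\varepsilon)$ is fully connected and its largest component trivially exceeds $\alpha N$. For the lower bound at $\varepsilon = C_1(\log N/N)^{1/d}\mathrm{Vol}(\mathcal{M}_\pi)^{1/d}$ I would count isolated vertices: the probability a given $X_i$ has no neighbour is at least $(1-f_{\max}c_d\varepsilon^d/\mathrm{Vol}(\mathcal{M}_\pi))^{N-1}\sim N^{-f_{\max}c_d C_1^d}$, so by choosing $C_1$ small enough the expected number of isolated vertices exceeds $(1-\alpha)N$. A standard second-moment (Chebyshev) concentration argument then forces strictly more than $(1-\alpha)N$ vertices to sit outside the largest component with high probability, contradicting supercriticality at this radius.

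The main obstacle is a conceptual tension between two classical scales in continuum percolation: the \emph{giant-component} threshold scales as $N^{-1/d}$ (the scale at which the average degree hits the constant $\lambda_c$ appearing in Section~\ref{sec:scaling_proof}), whereas the \emph{connectivity} threshold scales as $(\log N/N)^{1/d}$. The isolated-vertex lower bound sharp-matches the latter, not the former, so for $\alpha$ close to $1/2$ the claimed $C_1$-bound is loose; it nevertheless remains valid once $C_1$ is shrunk further, at the cost of making $C_1$ depend on $\alpha$. I would therefore present the theorem with $C_1,C_2$ depending explicitly on $\alpha, f_{\min}, f_{\max}, d$, and emphasise that the downstream payoff---strict monotonicity of $\varepsilon_c(\pi;N)$ in $\mathrm{Vol}(\mathcal{M}_\pi)$ at fixed $N$ and $d$---follows from the $\mathrm{Vol}^{1/d}$ prefactor appearing identically in both the covering and isolated-vertex estimates, and is insensitive to whether the sharp rate carries the extra $(\log N)^{1/d}$ factor or not.
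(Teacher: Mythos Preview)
Your approach differs from the paper's: the paper gives a one-line rescaling argument, mapping $\mathcal{M}_\pi$ to a unit-volume copy via $x\mapsto V_\pi^{-1/d}x$, observing that this rescales all pairwise distances (and hence $\varepsilon_c$) by exactly $V_\pi^{1/d}$, and then cites Penrose~\cite{penrose2003} as a black box for the $(\log N/N)^{1/d}$ rate on the normalized manifold. Your covering-plus-isolated-vertex route is more self-contained and has the virtue of making the $f_{\min},f_{\max}$ dependence explicit rather than buried in a citation; the upper bound via an $\varepsilon/3$-net is clean and correct.

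There is, however, a genuine gap in your lower bound. You correctly compute that at radius $\varepsilon = C_1(\log N/N)^{1/d}V^{1/d}$ the isolation probability behaves like $N^{-\gamma}$ with $\gamma \asymp f_{\max}c_d C_1^d$, so the expected number of isolated vertices is $N^{1-\gamma}$. But $N^{1-\gamma}=o(N)$ for \emph{every} fixed $C_1>0$: shrinking $C_1$ only shrinks $\gamma$, which makes $N^{1-\gamma}$ grow polynomially, never linearly, so it can never exceed $(1-\alpha)N$ for large $N$. The isolated-vertex mechanism at this scale establishes the existence of \emph{some} isolated vertex (hence non-connectivity) once $\gamma<1$, but it cannot force a constant fraction of vertices outside the giant component. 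Your own remark about the tension between the $N^{-1/d}$ giant-component scale and the $(\log N/N)^{1/d}$ connectivity scale is precisely diagnosing this; for $\alpha\in(0,1)$ the $\alpha$-giant-component threshold genuinely lives at $N^{-1/d}$, and a lower bound of order $(\log N/N)^{1/d}$ does not hold. To repair the argument you must either restrict to $\alpha\to 1$ (connectivity, where your second-moment isolated-vertex count is sharp) or accept the weaker $C_1 N^{-1/d}V^{1/d}$ lower bound and note---as you already do---that the $V^{1/d}$ prefactor survives intact, which is the only feature Theorem~\ref{thm:shrinkage} and the downstream monotonicity conclusion actually consume.
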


\begin{proof}[Proof Sketch]
Let $V_\pi := \mathrm{Vol}(\mathcal{M}_\pi)$. We define a scaling factor $a_\pi := V_\pi^{-1/d}$ and a linear map $T_\pi(x) = a_\pi x$. The transformed manifold $T_\pi(\mathcal{M}_\pi)$ has unit volume $\mathrm{Vol}(T_\pi(\mathcal{M}_\pi)) = 1$.
The Euclidean distance scales linearly: $\|X_i - X_j\| \le \varepsilon \iff \|T_\pi(X_i) - T_\pi(X_j)\| \le a_\pi \varepsilon$.
Thus, the RGG on $\mathcal{M}_\pi$ with radius $\varepsilon$ is isomorphic to an RGG on the unit-volume manifold with radius $\varepsilon' = a_\pi \varepsilon$.
According to classical continuum percolation theory (e.g., Penrose, 2003), the critical radius $r_c^{(1)}$ on a unit-volume manifold scales as $r_c^{(1)} \asymp (\frac{\log N}{N})^{1/d}$.
Mapping this back to the original space:
\[ \varepsilon_c(\pi;N) = a_\pi^{-1} r_c^{(1)} = V_\pi^{1/d} \cdot r_c^{(1)} \propto \mathrm{Vol}(\mathcal{M}_\pi)^{1/d}. \]
This confirms the monotonicity.
\end{proof}

\subsection{Robustness to Feature Embeddings}

In practice, we compute percolation in a representation space (e.g., VGG features or hidden states) rather than the intrinsic manifold metric.

\begin{assumption}[Bi-Lipschitz Representation]
Let $(M, d_M)$ be the intrinsic metric space of the policy manifold. Let $(X, d_X)$ be the representation space induced by a mapping $\phi: M \to X$. We assume $\phi$ is \textbf{bi-Lipschitz} on the high-density regions of $M$: there exist constants $0 < c_1 \le c_2 < \infty$ such that:
\[ c_1\, d_M(x,y) \le d_X(\phi(x),\phi(y)) \le c_2\, d_M(x,y). \]
\end{assumption}

\begin{theorem}[Invariance of Percolation Shift]
\label{thm:invariance}
Let $\varepsilon_c^{(M)}$ and $\varepsilon_c^{(X)}$ be the critical radii computed in the intrinsic metric $d_M$ and representation metric $d_X$, respectively. Under the bi-Lipschitz assumption, there exist constants $\tilde{C}_1, \tilde{C}_2$ such that:
\begin{equation}
    \tilde{C}_1\, \varepsilon_c^{(M)}(\pi) \le \varepsilon_c^{(X)}(\pi) \le \tilde{C}_2\, \varepsilon_c^{(M)}(\pi).
\end{equation}
Consequently, $\varepsilon_c^{(X)}$ retains the monotonic relationship with $\mathrm{Vol}(\mathcal{M}_\pi)$. A negative shift in the representation space ($\Delta \varepsilon_c^{(X)} < 0$) rigorously implies a contraction of the intrinsic manifold volume.
\end{theorem}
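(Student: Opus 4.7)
The plan is to exploit the bi-Lipschitz assumption as a direct coupling between the two Random Geometric Graphs built on the same sample $\{X_1,\dots,X_N\}$, the first using the intrinsic metric $d_M$ and the second using the representation metric induced by $\phi$. The key observation is that the edge relations under the two metrics are comparable radius by radius. Concretely, the inequalities $c_1 d_M \le d_X \circ (\phi,\phi) \le c_2 d_M$ immediately yield the set inclusions $E_M(\varepsilon) \subseteq E_X(c_2\varepsilon)$ and $E_X(\varepsilon) \subseteq E_M(\varepsilon/c_1)$, since any pair meeting the tighter threshold automatically meets the looser one.

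Next, I would promote these edge-set inclusions to a sandwich on the giant component. Because adding edges can only merge components, the map $E \mapsto |C_{\max}(E)|$ is monotone nondecreasing, so $|C_{\max}(G_M(\varepsilon))| \le |C_{\max}(G_X(c_2\varepsilon))|$ and $|C_{\max}(G_X(\varepsilon))| \le |C_{\max}(G_M(\varepsilon/c_1))|$. Applying the infimum definition $\varepsilon_c(\pi;N) = \inf\{\varepsilon : |C_{\max}| \ge \alpha N\}$ to both inequalities produces the two-sided bound $c_1\,\varepsilon_c^{(M)}(\pi) \le \varepsilon_c^{(X)}(\pi) \le c_2\,\varepsilon_c^{(M)}(\pi)$, so we may take $\tilde{C}_1 = c_1$ and $\tilde{C}_2 = c_2$ (up to an adjustment of the giant-component threshold $\alpha$).

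The claimed consequence that $\varepsilon_c^{(X)}$ inherits monotonicity in $\mathrm{Vol}(\mathcal{M}_\pi)$ is then a direct composition with Theorem~\ref{thm:scaling}: that theorem gives $\varepsilon_c^{(M)}(\pi;N) \asymp (\log N / N)^{1/d}\,\mathrm{Vol}(\mathcal{M}_\pi)^{1/d}$, and the sandwich above transports this scaling to $\varepsilon_c^{(X)}$ with the constants $c_1,c_2$ absorbed. Hence $\Delta\varepsilon_c^{(X)} < 0$ between policy and reference forces a strict contraction of intrinsic volume (up to bi-Lipschitz distortion), which is the final qualitative conclusion.

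The main obstacle I anticipate is not the geometric sandwich, which is essentially mechanical, but the probabilistic subtleties in translating edge-set monotonicity through the random infimum defining $\varepsilon_c$. Because $\varepsilon_c$ is a sample-dependent quantity, the edge inclusions yield inequalities on $\varepsilon_c$ only on the event that the giant component is uniquely identifiable; one must appeal to the Burton--Keane uniqueness theorem (already cited in Section~2.2) together with Penrose-style concentration of the critical radius to pass from pathwise inclusions to a high-probability sandwich with clean constants. A secondary technical point is that the bi-Lipschitz bound is assumed only on the high-density region of $M$, so the tail behaviour of $f_\pi$ near the boundary of $\mathrm{supp}(\mu_\pi)$ may contribute a small distortion; this can be controlled by restricting the coupling to the essential support where $f_\pi \ge f_{\min}$ (Assumption~2) and absorbing the residual error into $\tilde{C}_1,\tilde{C}_2$.
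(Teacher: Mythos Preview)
Your proposal is correct and follows essentially the same route as the paper: both derive the edge-set inclusions $G_M(r) \subseteq G_X(c_2 r)$ and $G_X(s) \subseteq G_M(s/c_1)$ directly from the bi-Lipschitz bounds, then use monotonicity of the giant-component functional to transfer these to the sandwich $c_1\,\varepsilon_c^{(M)} \le \varepsilon_c^{(X)} \le c_2\,\varepsilon_c^{(M)}$. Your additional caveats about Burton--Keane uniqueness and the restriction of the bi-Lipschitz condition to the high-density region go beyond what the paper's proof sketch addresses, but they are reasonable technical concerns rather than departures from the argument.
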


\begin{proof}[Proof Sketch]
Let $G_M(r)$ and $G_X(s)$ denote the edge sets formed by radius $r$ in $M$ and radius $s$ in $X$.
From the bi-Lipschitz condition, we have the inclusion relationships:
\[ d_M(x,y) \le r \implies d_X(\phi(x),\phi(y)) \le c_2 r \implies G_M(r) \subseteq G_X(c_2 r). \]
\[ d_X(\phi(x),\phi(y)) \le s \implies d_M(x,y) \le s/c_1 \implies G_X(s) \subseteq G_M(s/c_1). \]
Since the critical radius is defined by the emergence of a giant component (a monotonic property of edge density), the inclusion of edge sets implies an ordering of thresholds:
\[ \varepsilon_c^{(X)} \le c_2 \varepsilon_c^{(M)} \quad \text{and} \quad \varepsilon_c^{(M)} \le \frac{1}{c_1} \varepsilon_c^{(X)}. \]
Combining these yields the result. This ensures that optimizing the percolation proxy in embedding space effectively optimizes the intrinsic volume of the policy manifold.
\end{proof}

\begin{remark}[Mathematical Rigor on Deep Feature Embeddings]
We acknowledge that deep networks (e.g., VGG) utilized as $\phi$ are strictly piecewise linear rather than smooth diffeomorphisms due to ReLU activations~\cite{montufar2014linear}. However, the theoretical validity of $\Delta \varepsilon_c$ holds under weaker conditions:

\begin{enumerate}
    \item \textbf{Piecewise Bi-Lipschitz Property:} Since $\phi$ is a composition of linear maps and non-expansive activations, it is globally Lipschitz ($c_2 < \infty$), and its Lipschitz constant can be estimated~\cite{virmaux2018lipschitz}. While it is not injective on the entire pixel space $\mathbb{R}^D$, generative modeling concerns the restricted mapping on the \textit{supported manifold} $\mathcal{M}$. Assuming $\phi$ is a robust semantic encoder that separates semantically distinct inputs, it induces a homeomorphism onto its image $\phi(\mathcal{M})$ that is bi-Lipschitz almost everywhere. This is sufficient to preserve the Hausdorff dimension and volume scaling laws up to constant factors.
    
    \item \textbf{Semantic Volume Definition:} Crucially, our metric does not aim to recover the volume of the raw pixel manifold (which is dominated by imperceptible noise). Instead, $\Delta \varepsilon_c$ measures the volume of the \textbf{Pushforward Measure} $\phi_{\#} \mu_\pi$ in the feature space. Singularities or dimensionality reduction induced by $\phi$ are feature-preserving: they collapse semantically irrelevant variations (noise basins) into single points. Therefore, a negative shift in this space ($\Delta \varepsilon_c < 0$) rigorously implies a contraction of the \textit{semantic} repertoire, which is the exact quantity of interest in generative evaluation.
\end{enumerate}
\end{remark}
\end{document}